\documentclass{article}

\PassOptionsToPackage{numbers, compress}{natbib}
\pdfoutput=1
 \usepackage[preprint]{neurips_2026}

\usepackage[utf8]{inputenc} 
\usepackage[T1]{fontenc}    
\usepackage{hyperref}       
\usepackage{url}            
\usepackage{booktabs}       
\usepackage{amsfonts}       
\usepackage{nicefrac}       
\usepackage{microtype}      
\usepackage{xcolor}         

\usepackage{times}
\usepackage{soul}
\usepackage{amssymb}
\usepackage{color}
\definecolor{queryblue}{RGB}{20,48,117}
\usepackage[small]{caption}
\usepackage{graphicx}
\usepackage{subcaption}
\usepackage{multirow}
\usepackage{float}
\usepackage{svg}
\usepackage{amsmath}
\usepackage{amsthm}
\usepackage{algorithm}
\usepackage{algorithmic}
\usepackage{enumitem}
\usepackage{wrapfig}
\newtheorem{lemma}{Lemma}

\title{Q-GNN: Query-Conditioned Graph Neural Networks with Type Awareness for Knowledge Graph Completion}

\author{
Dongxiao He, Ruqiong Zhang, Zhizhi Yu\thanks{Corresponding author}, Ling Ding, Di Jin, Guangquan Xu, Zhiyong Feng \\
College of Intelligence and Computing, Tianjin University \\
\texttt{\{joanz, yuzhizhi\}@tju.edu.cn}
}

\begin{document}

\maketitle

\begin{abstract}
Knowledge Graph Completion (KGC) aims at predicting missing triplets from incomplete knowledge graphs, which is crucial for downstream applications. Recently, Graph Neural Network (GNN)-based methods have achieved remarkable success by performing message passing over query-centered local subgraphs. However, in practice, a query is jointly defined by both the entity and the relation, with both carrying information indispensable for reasoning, yet these methods rely solely on the query relation as the guiding signal, while the information inherent in the query entity is not leveraged to guide inference — the entity serves merely as a structural anchor for subgraph extraction. To this end, we incorporate query entity information into the reasoning process from two perspectives: the first is structural context, i.e., the neighboring structure and relation patterns around the entity, which is encoded by a dedicated context encoder and used to modulate messages; the second is semantic type of the entity, inferred by a large language model, which is incorporated into attention computation and final scoring to provide type-level prior constraints. Together, these two sources of information enable the reasoning process to be guided by both the query relation and the query entity. Experimental results on standard benchmarks demonstrate the effectiveness of the proposed Q-GNN.
\end{abstract}

\section{Introduction}
Knowledge Graphs (KGs) are structured knowledge bases that model real-world concepts and their interrelations in the form of triplets (head entity, relation, tail entity), and have been widely applied in domains such as drug discovery~\citep{drug1,drug2}, recommendation systems~\citep{recommend1,recommend2}, and question answering~\citep{qa1,qa2}. However, most KGs are inherently incomplete, which limits their effectiveness in downstream tasks. As a result, Knowledge Graph Completion (KGC) has emerged as a key task that focuses on predicting missing triplets, where the goal is to identify the correct tail entity $t$ for a given query $\left(h,r,?\right)$, with $h$ being the query head entity and $r$ being the query relation.

In recent years, numerous advanced KGC approaches have been proposed, among which Graph Neural Network (GNN) methods based on message passing mechanism have gradually become dominant. Early GNN-based approaches~\citep{compgcn,ragat} perform query-independent message passing and aggregation across the entire knowledge graph to generate global representations of all entities and relations. However, this paradigm overlooks query-specific local evidence, limiting its ability to handle diverse queries. More recent studies~\citep{redgnn,diffusione} address this limitation by restricting message passing to query-centered local subgraphs and accordingly learning query-specific representations of candidate entities. These methods incorporate the query relation as a guiding signal to control the direction of message propagation or aggregation weights, thereby enabling the model to focus on capturing query-dependent knowledge.

\begin{wrapfigure}{r}{0.34\linewidth}
\centering
\captionsetup[subfigure]{skip=3pt}
\begin{subfigure}[t]{\linewidth}
\centering
\includegraphics[width=\linewidth]{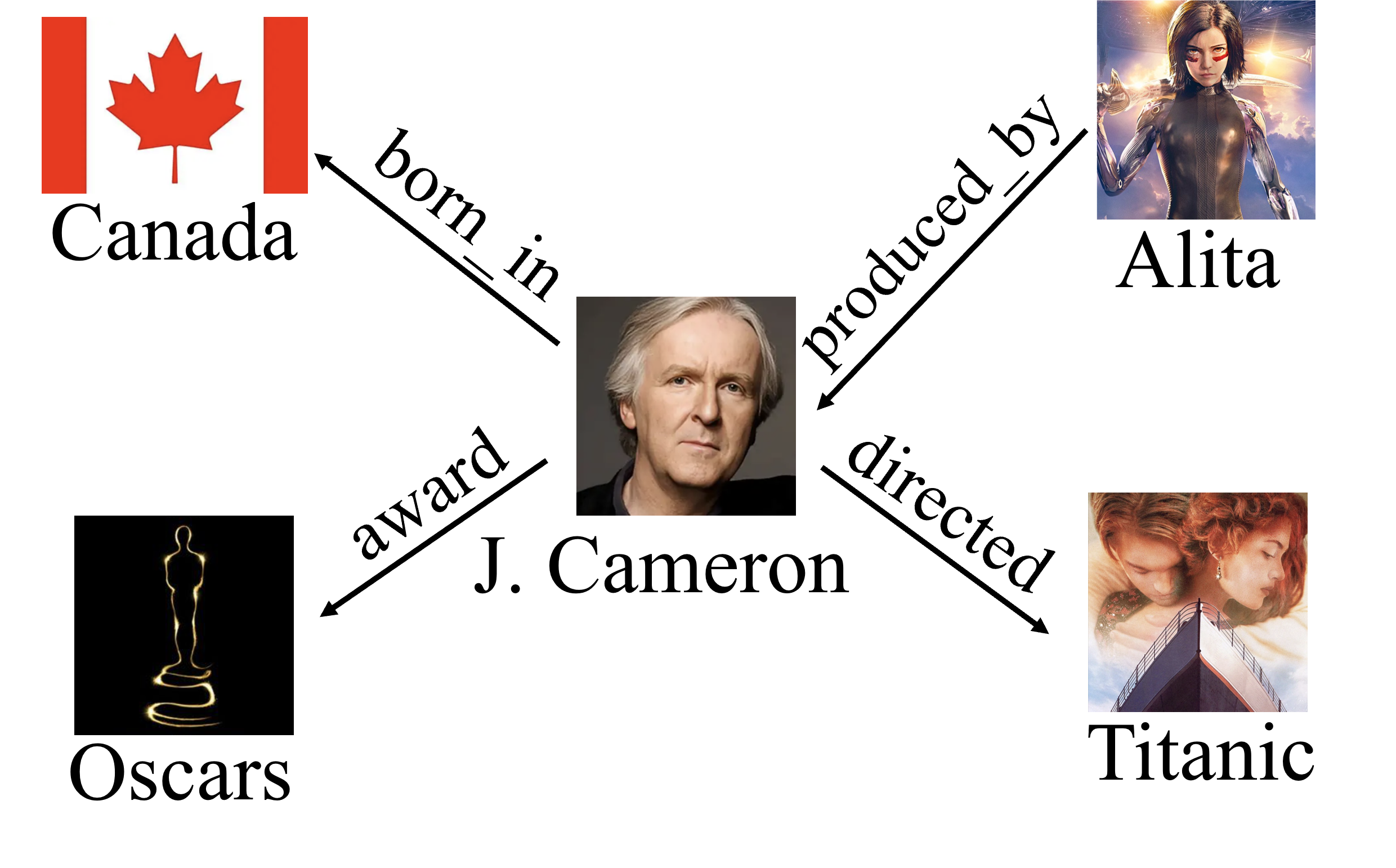}
\caption{Context of J. Cameron}
\label{fig:sub1}
\end{subfigure}

\vspace{0.6em}

\begin{subfigure}[t]{\linewidth}
\centering
\includegraphics[width=\linewidth]{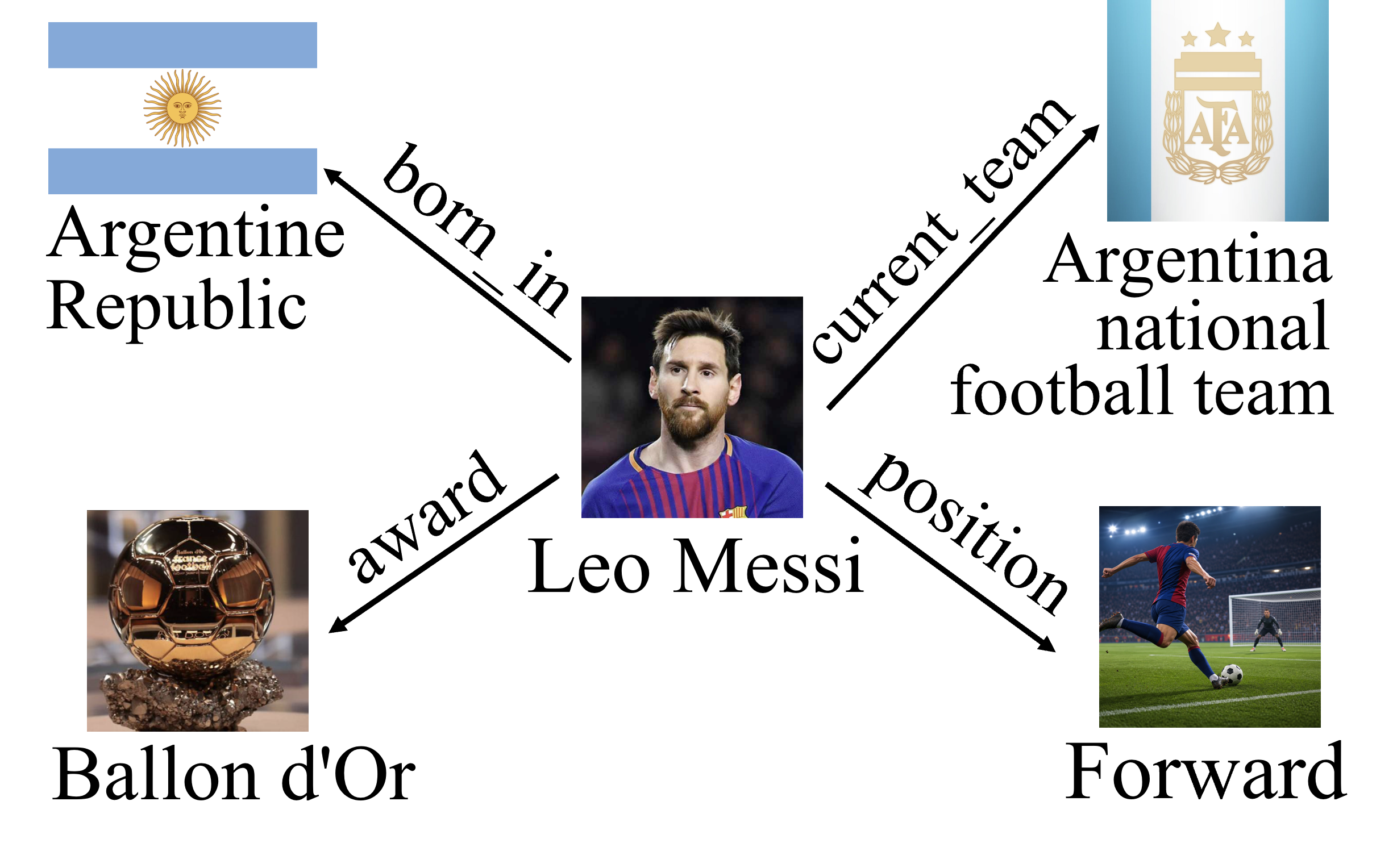}
\caption{Context of Leo Messi}
\label{fig:sub2}
\end{subfigure}

\caption{Local contexts of two different entities.}
\label{fig:context-subgraphs}
\end{wrapfigure}

While these methods achieve promising results, they largely follow a common paradigm in which the query relation guides message passing, whereas the query entity itself is treated only as a structural anchor for subgraph extraction. In fact, a query is jointly defined by both the entity and the relation, and the query entity also carries information valuable for guiding reasoning, which is crucial for determining what evidence should be propagated and emphasized. For instance, given the same query relation ``profession'', when ``James Cameron'' and ``Messi'' serve as query entities, the key evidence that reasoning should focus on differs substantially — the former should attend to information related to the film domain, while the latter shifts toward the football domain. Therefore, incorporating query entity information into reasoning is necessary, which raises a key challenge: how can we effectively capture the information of the query entity? To this end, we characterize the query entity from two perspectives. The first is structural context, i.e., the neighboring structure and relation patterns around the entity. As illustrated in Figure~\ref{fig:context-subgraphs}, the surrounding context of James Cameron and Messi clearly reflects their distinct identity characteristics. The second is semantic type, which characterizes the intrinsic attributes of an entity at the semantic level and serves as a high-level feature describing its identity. For instance, the type of James Cameron is person, while the type of Titanic is film.

Building upon the above motivation, we propose Q-GNN, a knowledge graph completion approach that explicitly incorporates query entity information into the reasoning process. Specifically, we first leverage the general knowledge of Large Language Models (LLMs) to obtain entity types, and subsequently employ custom statistical metrics to mine type paths for triplet filtering, thereby selecting valuable context. We then encode the context of the query entity via reverse message passing on the context subgraph, and inject the resulting embedding into candidate entity encoding as a query-conditioned modulation signal, so as to introduce contextual information into reasoning. Furthermore, we introduce a type-aware attention mechanism and a type-specific decoder to incorporate entity type information into both neighborhood aggregation and the final scoring. In this way, Q-GNN achieves candidate representation learning guided by both the query entity's contextual information and type information. 

Our contributions are as follows:
\begin{itemize}[leftmargin=1.5em, itemsep=0.3em, topsep=0em, parsep=0pt]
    \item We point out that current GNN-based methods rely solely on the query relation as the guiding signal for reasoning, while treating the query entity merely as a structural anchor, leaving the rich information carried by the query entity underexploited.
    \item We propose Q-GNN, a query-conditioned knowledge graph completion approach that explicitly incorporates query entity information into the reasoning process from two perspectives: structural context and semantic type. 
    \item Experiments on standard benchmarks demonstrate the superior performance of the proposed Q-GNN. 
\end{itemize}

\section{Preliminary}
\subsection{Knowledge Graph Completion}
A knowledge graph is denoted by $G=(E,R,F)$, where $E,R,F$ are the sets of entities, relations, and facts, respectively. Each fact is represented as a triplet $(e_h,r,e_t)\in F$, where $e_h,e_t\in E$ are the head and tail entities, and $r\in R$ is the relation connecting them. Throughout this paper, we use $\mathcal{G}$ to denote a subgraph and $G$ to denote the full knowledge graph. The knowledge graph completion task aims to predict the missing tail entity for a query $(e_h,r,?)$, or the missing head entity for a query $(?,r,e_t)$. To this end, all entities in $E$ are scored as candidate answers, with higher scores indicating a greater likelihood of being correct. Following common practice to unify the task format~\citep{DRUM,inverse2}, we transform queries in the form $(?,r,e_t)$ into $(e_t,r^{-1},?)$ by adding inverse edges, where $r^{-1}$ denotes the inverse relation of $r$. 
\subsection{Type Graph}
Considering that explicit entity type annotations in mainstream KG datasets are often unavailable or of limited quality, we leverage the commonsense reasoning capabilities of LLMs to obtain reliable type labels for each entity. Specifically, we use an LLM to assign a type $t_e$ to each entity $e\in E$. Through this annotation process, we obtain an entity-type mapping function $\tau$: $E \rightarrow E_T$, where $\tau(e) = t_e \in E_T$ denotes the type label assigned to entity $e$, and $E_T$ is the set of all types. The detailed entity type annotation process is provided in Appendix~\ref{app:typeannot}. We further provide experimental analysis of the type labels in Appendix~\ref{app:type-quality}.

Based on this, we define a type graph $G_T=(E_T,R_T)$ that uses types as nodes. Relations between types are homogenized, so that $R_T$ contains a single relation $r_t$ representing a connection between two types. We construct the type graph by projecting each fact triplet $(e_h,r,e_t)$ in the original graph $G$ into a triplet $(\tau(e_h),r_t,\tau(e_t))$ in the type graph, where $\tau(e_h),\tau(e_t)\in E_T$ are the types of the head and tail entities, respectively.

\subsection{Feature-wise Linear Modulation}
Feature-wise Linear Modulation (FiLM)~\citep{film} is a mechanism designed to inject external conditioning information into neural networks. Its core idea is to use external conditions to generate a set of affine transformation parameters ($\gamma$ and $\beta$) to perform channel-wise linear modulation on the features of intermediate layers in the neural network, i.e., $\text{FiLM}(x,\gamma,\beta)=\gamma\cdot x+\beta$, thereby enabling the network to dynamically adjust its feature representations according to different conditional inputs. This mechanism has delivered impressive performance on visual reasoning tasks and can effectively answer image-related questions that require multi-step reasoning and high-level semantic understanding.
\section{Methodology}
This section describes the proposed knowledge graph completion method in detail. As illustrated in Figure~\ref{fig:architecture}, the proposed Q-GNN consists of four key modules: (1) Type-Guided Context Filter, which constructs a denoised query context graph through a type path filtering mechanism; (2) Query Context Encoder, which utilizes a type-aware GNN to obtain representations of the query entity context; (3) Query-Oriented Encoder, which employs a feature-wise linear modulation mechanism to encode candidate entities under query guidance; and (4) Type-Specific Decoder, which differentially computes the scores of candidate entities based on query entity types.
\begin{figure*}[t]
    \centering
    \includegraphics[width=0.98\textwidth]{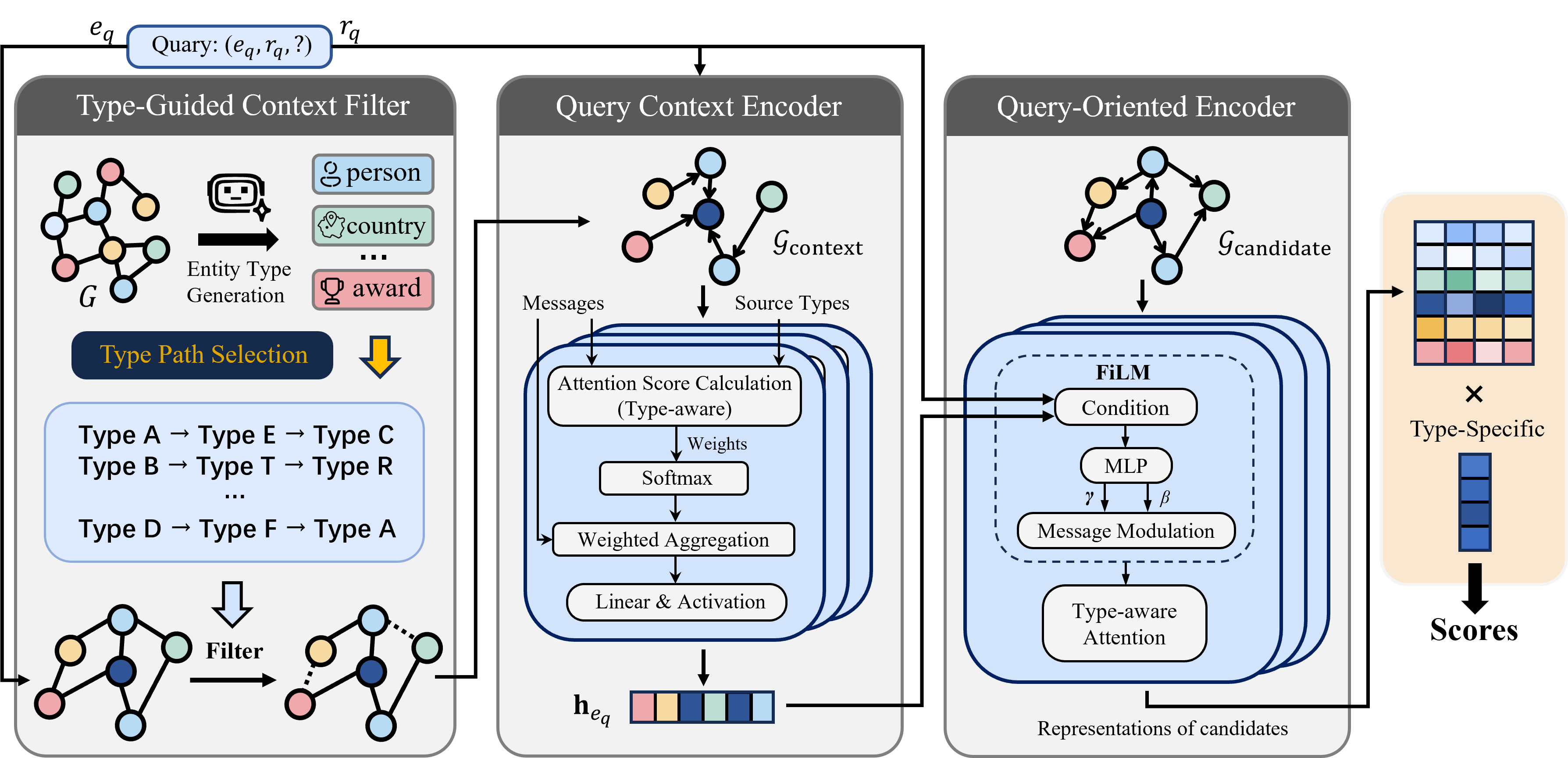}
    \caption{Architecture of Q-GNN. Given a query $(e_q, r_q, ?)$, the Query Context Encoder aggregates information toward the query entity \textbf{\textcolor{queryblue}{(blue node)}} via reverse message passing (tail$\rightarrow$head). The resulting context representation $\mathbf{h}_{e_q}$, combined with the query relation, modulates messages during forward message passing (head$\rightarrow$tail) in the Query-Oriented Encoder. Candidates are then scored by the Type-Specific Decoder.}
    \label{fig:architecture}
\end{figure*}
\subsection{Type-Guided Context Filter}
To select valuable context for the query entity, we mine evidence paths at the type level, since types abstract common patterns shared across a class of entities and yield strong generalizability. Specifically, we define a type path $p_t=(t_1,t_2,\ldots,t_K)$ as an ordered sequence of types on $G_T$, with $K$ the maximum path length. Given the query head type $t_{\mathrm{head}}$ and any tail type $t_{\mathrm{tail}}\in E_T$ connected to it in $G_T$, we enumerate all type paths starting from $t_{\mathrm{head}}$ with length up to $K$ via breadth-first search. Note that a path does not need to terminate at $t_{\mathrm{tail}}$, since a substantial portion of supporting evidence does not appear in the form of closed paths~\citep{ap}.

For a type path $p_t$, let $\mathcal{S}_p(t_{\mathrm{head}},p_t)$ denote the set of type-$t_{\mathrm{head}}$ head entities that have path instances satisfying $p_t$ in $G$. Since $p_t$ already implicitly contains the type constraint on the path's head entity, we simplify the notation to $\mathcal{S}_p(p_t)$ below. Let $\mathcal{M}(p_t, t_{\mathrm{tail}})$ denote those that additionally have edges connecting to type-$t_{\mathrm{tail}}$ tail entities. Based on these, we introduce two metrics, confidence and focus, to assess the reliability and discriminativeness of a type path with respect to a target tail type:
\begin{align}
    &\text{Conf}(p_t, t_\mathrm{tail}) = \frac{|\mathcal{M}(p_t, t_\mathrm{tail})|}{ |\mathcal{S}_p(p_t)|}, \\
&\text{Focus}(p_t, t_\mathrm{tail}) = \frac{|\mathcal{M}(p_t, t_\mathrm{tail})|}{\sum_{t_\mathrm{tail}^\prime} |\mathcal{M}(p_t, t_\mathrm{tail}^\prime)|}.
\end{align}
where $|\cdot|$ denotes the cardinality of a set, and $t_{\mathrm{tail}}^\prime$ denotes each tail type connected to $t_{\mathrm{head}}$. 
Furthermore, we introduce textual semantic relevance as an auxiliary metric to calculate the semantic embedding similarity between each type $t_k$ on the type path $p_t$ (excluding the head type) and $t_{\mathrm{tail}}$: 
\begin{equation}
    \text{Semantic}(p_t, t_\mathrm{tail}) = \prod_{k=1}^{K-1} \cos(\mathbf{e}_{t_{k+1}}, \mathbf{e}_{t_\mathrm{tail}})^{\frac{1}{K-1}},
\end{equation}
where $\mathbf{e}_t$ denotes the textual semantic embedding of type $t$. The exponent $\frac{1}{K-1}$ computes the geometric mean, enabling fair comparison across paths of different lengths regardless of the number of multiplicative terms. Details on type semantic embedding generation are provided in Appendix~\ref{app:sematicembed}. 

Finally, we retain type paths that satisfy predefined thresholds on all three metrics—confidence, focus and semantic relevance—and extract all triplets from the original graph that conform to these paths, forming the filtered context graph $G_{\mathrm{context}}$, which is used for the subsequent query context encoding.

\subsection{Query Context Encoder}
\label{section:Query Context Encoder}
To characterize the contextual information of the query entity, we propose a query context encoder. This module aggregates information toward the query entity through message passing on the context graph $G_{\mathrm{context}}$, with a type-aware attention mechanism providing type guidance.

For each query $(e_q, r_q, ?)$, we start from $e_q$ and iteratively expand the context subgraph on $G_{\mathrm{context}}$ layer by layer~\citep{redgnn}. At the $l$-th layer, the subgraph is expanded by one hop to obtain $\mathcal{G}_{\mathrm{context}}^{(l)}$, on which message passing is then performed. This process repeats for $L_c$ layers.

Specifically, the initial subgraph $\mathcal{G}_{\mathrm{context}}^{(0)}$ contains all triplets in $G_{\mathrm{context}}$ with $e_q$ as the head entity, and the subgraph extension at the $l$-th layer is defined as:
\begin{equation}
    \mathcal{G}_{\mathrm{context}}^{(l)} = \mathcal{G}_{\mathrm{context}}^{(l-1)} \cup \{(e_h, r, e_t) \in G_{\mathrm{context}} \mid e_h \in \mathcal{V}^{(l-1)}\},
\end{equation}
where $\mathcal{V}^{(l-1)}$ represents all nodes in $\mathcal{G}_{\mathrm{context}}^{(l-1)}$.

\noindent\textbf{Message Passing.} To aggregate information back to the query entity, we adopt a tail-to-head propagation strategy. Given a triplet $(e_h, r, e_t)$ in subgraph $\mathcal{G}_{\mathrm{context}}^{(l)}$, the message is defined as the sum of the source node representation and the relation embedding: 
\begin{equation}
    \mathbf{m}_{e_t \rightarrow e_h}^{(l)} = \mathbf{h}_{e_t}^{(l-1)} + \mathbf{e}^{(l)}_{r},
\end{equation}
where $\mathbf{h}_{e_t}^{(l-1)}\in\mathbb{R}^d$ is the hidden representation of node $e_t$ at layer $l$-1, and $d$ is the dimension of the representation. $\mathbf{e}_r^{(l)}$ is the learnable embedding vector of relation $r$ at layer $l$, initialized randomly. Subsequently, the node embedding is updated through weighted aggregation:
\begin{equation}
    \mathbf{h}_{e_h}^{(l)} = \sigma\left( \mathbf{W}_h^{(l)} \sum_{e_t \in \mathcal{N}(e_h)} \alpha_{e_t \rightarrow e_h}^{(l)} \mathbf{m}_{e_t \rightarrow e_h}^{(l)} \right),
\end{equation}
where $\alpha_{e_t\rightarrow e_h}^{(l)}$ is the attention weight of the message $\mathbf{m}_{e_t\rightarrow e_h}^{(l)}$, $\mathbf{W}_h^{(l)}\in\mathbb{R}^{d\times d}$ is a learnable weight matrix, and $\sigma(\cdot)$ denotes the activation function.

\noindent \textbf{Type-Aware Attention.} Since the message propagates from tail to head, we use the type embedding of the tail entity (i.e., the message source node) to inform the attention computation. For each message received by the target node $e_h$, its attention weight is calculated as follows:
\begin{equation}
    \text{logits}_{e_t \rightarrow e_h}^{(l)} = \mathbf{W}^{(l)}_\alpha \text{ReLU}( \mathbf{W}^{(l)}_s \mathbf{h}_{e_t} ^{(l-1)}+ \mathbf{W}^{(l)}_r \mathbf{e}_{r}^{(l)} + \mathbf{W}^{(l)}_t \mathbf{e}_{\tau(e_t)} + \mathbf{W}^{(l)}_q \mathbf{e}_{r_q} ), 
\end{equation}
\begin{equation}
    \alpha_{e_t \rightarrow e_h}^{(l)} = \frac{\exp(\text{logits}_{e_t \rightarrow e_h}^{(l)})}{\sum_{e_t' \in \mathcal{N}(e_h)} \exp(\text{logits}_{e_t' \rightarrow e_h}^{(l)})},
\end{equation}
where $\tau(e_t)$ denotes the type of node $e_t$, $\mathbf{e}_{\tau(e_t)}$ is the corresponding type embedding, and $\mathbf{e}_{r_q}$ is the embedding of the query relation. Both the type embeddings and the query relation embeddings are shared across all layers. $\mathbf{W}_s^{(l)},\mathbf{W}_r^{(l)},\mathbf{W}_t^{(l)},\mathbf{W}_q^{(l)}\in\mathbb{R}^{d_a\times d}$ and $\mathbf{W}_\alpha^{(l)}\in\mathbb{R}^{1\times d_a}$ are learnable projection matrices specific to layer $l$, and $d_a$ is the attention space dimension. For a given query entity, different parts of its context may contribute unequally depending on the query relation. By incorporating $\mathbf{e}_{r_q}$ into the attention computation, the model can focus on contextual information that is more relevant to the current query. 

After $L_c$ layers of message passing, the final representation of the query head entity $\mathbf{q}_h=\mathbf{h}_{e_q}^{(L_c)}$ will serve as the query context representation and guide the subsequent encoding of candidate entities.

\subsection{Query-Oriented Encoder}
After obtaining the query context embedding from the query context encoder, we introduce query information into the encoding process of candidate entities by using a query-dependent conditioning signal to modulate message propagation.

Similar to Section~\ref{section:Query Context Encoder}, we start from the query head entity $e_q$ and apply $L_1+L_2$ layers of GNN on the complete knowledge graph $G$. During the first $L_1$ layers, we synchronously expand the query subgraph $\mathcal{G}_{\mathrm{candidate}}^{(l)}$ layer by layer. All entities contained in the subgraph $\mathcal{G}_{\mathrm{candidate}}^{(L_1)}$ are treated as candidate entities. Unlike the query context encoder, this stage adopts head-to-tail message passing and introduces FiLM to achieve query-conditioned modulation of messages.

\noindent\textbf{FiLM Message Modulation.} Inspired by Feature-wise Linear Modulation from the visual reasoning domain, we design a message modulation mechanism that treats the query as an explicit conditioning signal to guide message propagation.

For a triplet $(e_h,r,e_t)$ in subgraph $\mathcal{G}_{\mathrm{candidate}}^{(l)}$, the basic message is first computed:
\begin{equation}
    \mathbf{m}_{e_h \rightarrow e_t}'^{(l)} = \mathbf{W}^{(l)}_m [\mathbf{h}_{e_h}'^{(l-1)} \| \mathbf{e}_{r}'^{(l)}],
\end{equation}
where $\mathbf{h}_{e_h}'^{(l-1)}$ is the head entity's hidden representation, $\|$ denotes the concatenation operation, and $\mathbf{W}_m^{(l)}\in\mathbb{R}^{d\times2d}$ is the message projection matrix. 

Subsequently, modulation parameters are generated by a FiLM network conditioned on both the query context representation $\mathbf{q}_h$ and the query relation embedding $\mathbf{e}_{r_q}$:
\begin{equation}
    \mathbf{z}_q^{(l)}
    =
    \text{MLP}^{(l)}_\text{FiLM}
    \left( [\mathbf{q}_h \| \mathbf{e}_{r_q}] \right)
    \in \mathbb{R}^{2d},
\end{equation}
where $\text{MLP}^{(l)}_\text{FiLM}(\cdot)=\mathbf{W}_2^{(l)}\text{ReLU}\left(\mathbf{W}_1^{(l)}(\cdot)\right)$, $\mathbf{W}_1^{(l)}\in\mathbb{R}^{d\times2d}$ and $\mathbf{W}_2^{(l)}\in\mathbb{R}^{2d\times d}$ are learnable parameters. The $2d$-dimensional output is partitioned along the feature dimension into two $d$-dimensional vectors:
\begin{equation}
    \boldsymbol{\gamma}_q^{(l)}
    =
    \mathbf{z}_{q,1:d}^{(l)},
    \quad
    \boldsymbol{\beta}_q^{(l)}
    =
    \mathbf{z}_{q,d+1:2d}^{(l)}.
\end{equation}
Here, $\boldsymbol{\gamma}_q^{(l)}$ and $\boldsymbol{\beta}_q^{(l)}$ are query-specific modulation parameters. For a given query, they are shared across all messages in the same layer, while different queries generate different FiLM parameters. 

Finally, messages are modulated conditioned on the query through an affine transformation:
\begin{equation}
            \displaystyle
            \tilde{\mathbf{m}}_{e_h \rightarrow e_t}'^{(l)} = \text{LayerNorm}\left((1 + \tanh(\boldsymbol{\gamma}_q^{(l)})) \odot \mathbf{m}_{e_h \rightarrow e_t}'^{(l)} + \boldsymbol{\beta}_q^{(l)}\right),
\end{equation}
where $\odot$ denotes the Hadamard product.

\begin{lemma}[Message-level dependence on query context] 
\label{lem:message-query-dependence}
Let $l$ be any layer and let $(e_h, r, e_t) \in \mathcal{E}$ be any edge. Treat $r_q$ and $\mathbf{h}_{e_h}^{(l-1)}$ as fixed. Under the non-degeneracy assumptions on the FiLM modulation, there exists a non-empty open set of pairs $(\mathbf{q}_h^{(1)}, \mathbf{q}_h^{(2)})$ such that the modulated messages satisfy $\tilde{\mathbf{m}}_{e_h \to e_t}'^{(l)}(\mathbf{q}_h^{(1)}) \neq \tilde{\mathbf{m}}_{e_h \to e_t}'^{(l)}(\mathbf{q}_h^{(2)})$.
\end{lemma}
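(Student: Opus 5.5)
The plan is to view the modulated message as a composition of maps in $\mathbf{q}_h$. We exhibit one pair $(\mathbf{q}_h^{(1)},\mathbf{q}_h^{(2)})$ giving distinct outputs, and then use continuity to obtain an open set. With $r_q$, $\mathbf{h}_{e_h}'^{(l-1)}$ and $\mathbf{e}_r'^{(l)}$ fixed, the basic message $\mathbf{m}:=\mathbf{m}_{e_h\to e_t}'^{(l)}$ is a constant vector. The modulated message is
\[
F(\mathbf{q}_h)=\text{LayerNorm}\bigl((1+\tanh(\boldsymbol{\gamma}(\mathbf{q}_h)))\odot\mathbf{m}+\boldsymbol{\beta}(\mathbf{q}_h)\bigr),
\]
where $(\boldsymbol{\gamma},\boldsymbol{\beta})=\text{MLP}^{(l)}_\text{FiLM}([\mathbf{q}_h\|\mathbf{e}_{r_q}])$. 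Every ingredient ($\mathrm{ReLU}$, $\tanh$, affine maps, and LayerNorm away from zero variance) is continuous. Hence $F$ is continuous on the open set where the pre-normalization vector has nonzero variance.

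I would state the non-degeneracy assumptions concretely as follows:
(i) the map $\mathbf{q}_h\mapsto\mathbf{z}_q^{(l)}$ is non-constant, i.e.\ $\mathbf{W}_2^{(l)}$ is nonzero on the image of $\mathrm{ReLU}(\mathbf{W}_1^{(l)}[\cdot\|\mathbf{e}_{r_q}])$ as $\mathbf{q}_h$ varies, which requires some active unit whose input depends on $\mathbf{q}_h$;
(ii) the resulting change in the pre-LayerNorm vector $\mathbf{u}(\mathbf{q}_h)=(1+\tanh\boldsymbol{\gamma})\odot\mathbf{m}+\boldsymbol{\beta}$ is not absorbed by LayerNorm.

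Under these assumptions the key steps are as follows.
\textbf{Step 1.} Pick $\mathbf{q}^{(1)}$ in the interior of a linear region of the ReLU MLP on which $\mathbf{z}_q$ is affine with nonzero Jacobian in $\mathbf{q}_h$. Moving in a direction $\mathbf{v}$ with $J\mathbf{v}\neq 0$ then changes $(\boldsymbol{\gamma},\boldsymbol{\beta})$.
\textbf{Step 2.} Transfer this change to $\mathbf{u}$. Since $\tanh$ is strictly monotone, a change in $\beta_i$ alone changes $u_i$. A change in $\gamma_i$ changes $u_i$ whenever $m_i\neq0$. One must rule out cancellation between the two. Because $\mathbf{u}$ is affine in $\boldsymbol{\beta}$ and strictly monotone in each $\gamma_i$, cancellation happens only on a measure-zero (non-generic) set of directions, so $\mathbf{v}$ can be chosen to avoid it.
\textbf{Step 3.} LayerNorm (with affine parameters $\mathbf{g}\neq0$ entrywise, $\mathbf{b}$) is injective modulo the equivalence $\mathbf{u}\sim a\mathbf{u}+c\mathbf{1}$ with $a>0$. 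We therefore need $\delta\mathbf{u}$ to leave this two-dimensional family, i.e.\ $\delta\mathbf{u}\notin\mathrm{span}(\mathbf{u},\mathbf{1})$ to first order. For $d\ge3$ this holds for generic directions by assumption (ii). Otherwise we choose $\mathbf{q}^{(2)}=\mathbf{q}^{(1)}+\epsilon\mathbf{v}$ with the needed property.

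This yields one pair with $F(\mathbf{q}^{(1)})\neq F(\mathbf{q}^{(2)})$. The function $(\mathbf{a},\mathbf{b})\mapsto F(\mathbf{a})-F(\mathbf{b})$ is continuous near this pair, and $\{\mathbf{x}:\mathbf{x}\neq0\}$ is open. Its preimage is therefore an open neighborhood of the pair, which gives the required non-empty open set.

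The main obstacle is Step 3 together with the cancellation in Step 2. There is a genuine degenerate case: LayerNorm collapses $\mathbf{u}$ and $a\mathbf{u}+c\mathbf{1}$, and there are also ReLU dead zones. This is exactly why the lemma is phrased "under non-degeneracy assumptions". My first attempt would be a first-order (Jacobian) argument. I would show that the differential of $F$ at $\mathbf{q}^{(1)}$ is nonzero by composing the nonzero MLP Jacobian, the diagonal map $\mathrm{diag}(\mathbf{m}\odot\mathrm{sech}^2\boldsymbol{\gamma})$ together with the identity on $\boldsymbol{\beta}$, and the LayerNorm Jacobian, whose kernel is exactly $\mathrm{span}(\mathbf{u},\mathbf{1})$. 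The assumption would be stated so that the image of the first two Jacobians is not contained in this kernel. A nonzero derivative then gives distinct values for small $\epsilon$.
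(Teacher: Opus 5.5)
Your proof works, but by a different route from the paper's. It needs the non-degeneracy hypothesis to be the one in your last paragraph: at some point $\mathbf{q}^{(1)}$ interior to a ReLU linear region, the image of $\mathrm{diag}(\mathbf{m}\odot\mathrm{sech}^2\boldsymbol{\gamma})J_\gamma+J_\beta$ is not contained in the kernel of the LayerNorm Jacobian. Your argument is local and first-order; the paper's is global and exact.

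The paper keeps the $\epsilon>0$ in LayerNorm and writes it as $\Phi(P\mathbf{u})\odot\boldsymbol{\eta}+\mathbf{b}$, where $P$ removes the mean and $\Phi(x)=x/\sqrt{\|x\|^2/d+\epsilon}$. It then shows $\Phi$ is injective: take norms, and use that $r\mapsto r/\sqrt{r^2/d+\epsilon}$ is strictly increasing. With $\boldsymbol{\eta}$ entrywise nonzero, this gives $\mathrm{LayerNorm}(\mathbf{u}^{(1)})=\mathrm{LayerNorm}(\mathbf{u}^{(2)})\iff P\mathbf{u}^{(1)}=P\mathbf{u}^{(2)}$. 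So the only assumption needed is that $P\mathbf{u}(\mathbf{q}_h)$ is non-constant. No linear regions, derivatives or genericity enter, and continuity holds everywhere because $\epsilon$ keeps the denominator positive.

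You instead treat LayerNorm as scale-invariant ($\epsilon=0$). That is why your collapse set is $\mathrm{span}(\mathbf{u},\mathbf{1})$ and why you must stay in the nonzero-variance region. For the actual $\epsilon>0$ LayerNorm the Jacobian kernel is only $\mathrm{span}(\mathbf{1})$, so your claim that it is exactly $\mathrm{span}(\mathbf{u},\mathbf{1})$ is false there. Your hypothesis is therefore stronger than necessary, though it still implies the result. Your route buys an explicit breakdown of where degeneracy can arise (dead ReLUs, $\gamma$--$\beta$ cancellation, normalization) and covers the $\epsilon=0$ case. The paper's buys a global argument whose assumption is essentially minimal.

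Do not rely on the genericity claims in Steps 2 and 3. Cancellation need not be confined to a measure-zero set of directions, and generic directions need not leave $\mathrm{span}(\mathbf{u},\mathbf{1})$. Your first-stated assumption (i) does not prevent this. For example, if $\boldsymbol{\gamma}$ is constant and $\boldsymbol{\beta}(\mathbf{q}_h)$ varies only along $\mathbf{1}$, then $\mathbf{z}_q$ is non-constant, yet every direction is absorbed by LayerNorm and the message never changes. The proof is rigorous only because your final formulation places exactly this condition in the hypothesis, so state the argument from that formulation alone.
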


Lemma~\ref{lem:message-query-dependence} shows that FiLM modulation has the capacity to introduce $\mathbf{q}_h$-dependence into messages — under a reasonable non-degenerate parameterization, there exist changes in $\mathbf{q}_h$ that alter the message output. The proof and details of non-degeneracy assumptions is in Appendix~\ref{app:proof-lemma1}.

We adopt the type-aware attention mechanism for message aggregation. Since messages now propagate from head to tail, we use the head entity's type embedding in attention computation. The aggregation and update processes follow the same formulation as in the Query Context Encoder, but with reversed message direction. After $L_1+L_2$ layers of message passing, we obtain the final representation $\mathbf{h}_e$ for each candidate entity $e$ in the subgraph $\mathcal{G}_{\mathrm{candidate}}^{(L_1)}$.

\subsection{Type-Specific Decoder}
Considering that different types of head entities exhibit significant variations in their preferences for tail entities, we propose a type-specific decoder that performs differentiated scoring for query entities of different types.

\noindent\textbf{Type-Specific Scoring.} We learn a specific scoring weight vector $\mathbf{w}_t\in\mathbb{R}^d$ for each type $t$. The score of a candidate entity is computed as the inner product between its representation and the weight vector corresponding to the query entity's type:
\begin{equation}
    s(e\mid e_q,r_q)=\mathbf{w}_{\tau(e_q)}^\top\mathbf{h}_e,
\end{equation}
where $\mathbf{w}_{\tau(e_q)}$ is the learnable weight vector corresponding to the type $\tau(e_q)$ of the query head entity, and $\mathbf{h}_e$ is the final representation of candidate entity $e$. Before scoring, candidate entities are filtered using type-relation-type masks, retaining only candidates whose type combinations appeared in the training data.

\noindent\textbf{Training Objective.} We train the model using the multi-class log loss~\citep{loss}:
\begin{equation}
            \displaystyle
            L=\sum_{\left(e_q, r_q, e_t\right) }\left(-s\left(e_t\mid e_q, r_q\right)+\log \left(\sum_{e \in E} \exp \left(s\left(e \mid e_q, r_q\right)\right)\right)\right),
\end{equation}
where $(e_q,r_q,e_t)$ denotes positive triplets in the training set. This objective encourages the model to assign higher scores to correct answers while suppressing those of other candidate entities.

\noindent\textbf{Complexity.} The per-query complexity of Q-GNN is $O\!\left(\bigl(\bar{D}_c^{L_c} + (1+L_2)\bar{D}^{L_1}\bigr)\, d^2\right)$, where $\bar{D}$ and $\bar{D}_c$ are the average degrees of the full knowledge graph and the filtered context graph, respectively, and $d$ is the hidden dimension. A detailed derivation is provided in Appendix~\ref{app:complexity}.

\section{Experiments}
\subsection{Experimental Setup}

\noindent\textbf{Evaluation Metrics.} We use Mean Reciprocal Rank (MRR), Hit@1, Hit@3, and Hit@10 as evaluation metrics, where Hit@N denotes the proportion of queries for which the correct entity is ranked within the top N. Higher values indicate better performance. We adopt the standard filtered evaluation setting~\citep{transe,Neural-LP,DRUM}.

\noindent\textbf{Implementation.} Adam~\citep{adam} is used as the optimizer. For all baseline methods, we report the results in the original papers or obtained using their official implementations. Experiments are conducted on NVIDIA RTX 2080 Ti, RTX 4090, and RTX A6000 GPUs. Dataset statistics and detailed hyperparameter configurations are provided in Appendix~\ref{app:dataset} and Appendix~\ref{app:hyper}, respectively.

\subsection{Transductive Results}

\noindent\textbf{Datasets.} For the transductive setting, we evaluate Q-GNN on two widely used benchmarks: WN18RR~\citep{wn18rr} and FB15k-237~\citep{fb15k237}. We use the standard splits from the original works. Additional results on NELL-995~\citep{nell995}, UMLS~\citep{umls} and Family~\citep{umls} are provided in Appendix~\ref{app:additional}.

\noindent\textbf{Baselines.} We compare Q-GNN with three categories of methods: (1) triplet-based methods: RotatE~\citep{rotate}, HousE~\citep{house}, CompoundE~\citep{compoundE}, MSHE~\citep{mshe}; (2) path-based methods: DRUM~\citep{DRUM}, RNNLogic~\citep{rnnlogic}, TCRA~\citep{tcra}; and (3) GNN-based methods: CompGCN~\citep{compgcn}, RED-GNN~\citep{redgnn}, MGTCA~\citep{MGTCA}, MR-SAGCN~\citep{MR-SAGCN}, DiffusionE~\citep{diffusione}, LORE~\citep{lore}.
\begin{table*}[h]
\centering
\caption{Experimental results on WN18RR and FB15k-237 datasets. Best scores are shown in \textbf{bold} while the second-best results are shown in the \underline{underline}.}
 \resizebox{\columnwidth}{!}{\begin{tabular}{c l cccc cccc}
\toprule
\multirow{2}{*}{Type} & \multirow{2}{*}{Methods}
& \multicolumn{4}{c}{WN18RR}
& \multicolumn{4}{c}{FB15k-237} \\
\cmidrule(lr){3-6} \cmidrule(lr){7-10}
& & MRR & Hit@1 & Hit@3 & Hit@10 & MRR & Hit@1 & Hit@3 & Hit@10 \\
\midrule
\multirow{4}{*}{Triple}
& RotatE   & .476 & .428 & .492 & .571& .338 & .241 & .375 & .533 \\
& HousE   & .511 & .465 & .528& .602& .361 & .266 & .399 & .551 \\
& CompoundE    & .491 & .450 & .508& .576& .357 & .264 & .393 & .545 \\
& MSHE   & .461 & .429 & .473& .530& .356 & .264 & .392 & .544 \\
\midrule
\multirow{4}{*}{Path}
& DRUM & .486 & .425 & .513& .586& .343 & .255 & .378 & .516 \\
& RNNLogic    & .513 & .471 & .532& .597& .349 & .258 & .385 & .533 \\
& TCRA    & .496 & .457 & .511& .574& .367 & .275 & .403 & .554 \\
\midrule
\multirow{7}{*}{GNN}
& CompGCN & .479 & .443 & .494& .546& .355 & .264 & .390 & .535 \\
& RED-GNN & .538 & .489 & .560& .632& .376 & .286 & .411 & .558 \\
& MGTCA & .511 & .475 & .525& .593& \underline{.393} & .291 & \underline{.428} & \underline{.583} \\
& MR-SAGCN  & .489 & .450 & .505& .563& .368 & .276 & .403 & .550 \\
& DiffusionE  & \underline{.557} & \underline{.506} & \underline{.579}& \underline{.654}& .378 & \underline{.293} & .412 & .543 \\
& LORE  & .363 & .321 & .387& .432& .386 & .308 & .412 & .567 \\
& Q-GNN (ours) & \textbf{.572} & \textbf{.521} & \textbf{.594}& \textbf{.665}& \textbf{.408} & \textbf{.314} & \textbf{.446} & \textbf{.593} \\
\bottomrule
\end{tabular}}
\label{tab:transductive_main}
\end{table*}

\noindent\textbf{Results.} As shown in Table~\ref{tab:transductive_main}, Q-GNN achieves the best performance across all evaluation metrics on both datasets. Among the baselines, GNN-based methods generally outperform triplet- and path-based methods by exploiting structural information through message passing. Within GNN-based methods, query-centered approaches outperform those with query-independent message passing, as they learn query-specific candidate representations. Q-GNN further advances this line by explicitly leveraging the query entity's contextual semantics and type information, rather than treating it as a mere structural anchor. 

\subsection{Inductive Results}
\begin{table*}[h]
\centering
\caption{Inductive results (MRR) on WN18RR, FB15k-237, and NELL-995.}
\resizebox{\textwidth}{!}{
\begin{tabular}{l cccc cccc cccc}
\toprule
\multirow{2}{*}{Methods}
& \multicolumn{4}{c}{WN18RR}
& \multicolumn{4}{c}{FB15k-237}
& \multicolumn{4}{c}{NELL-995} \\
\cmidrule(lr){2-5} \cmidrule(lr){6-9} \cmidrule(lr){10-13}
& V1 & V2 & V3 & V4 & V1 & V2 & V3 & V4 & V1 & V2 & V3 & V4 \\
\midrule
DRUM       & .666 & .646 & .380 & .627 & .333 & .395 & .402 & .410 & \underline{.628} & .365 & .375 & .273 \\
GraIL      & .627 & .625 & .323 & .553 & .279 & .276 & .251 & .227 & .481 & .297 & .322 & .262 \\
CoMPILE    & .577 & .578 & .308 & .548 & .287 & .276 & .262 & .213 & .330 & .248 & .319 & .229 \\
RED-GNN    & .700 & .689 & .429 & .642 & .354 & .452 & .417 & .432 & .613 & .362 & .407 & \underline{.317} \\
NBFNet     & .684 & .652 & .425 & .604 & .307 & .369 & .331 & .305 & .584 & \underline{.410} & \underline{.425} & .287 \\
DiffusionE & \underline{.706} & \underline{.701} & \underline{.430} & \underline{.654} & \underline{.371} & \underline{.485} & \underline{.465} & \underline{.447} & .607 & \textbf{.447} & \textbf{.445} & .286 \\
\midrule
Q-GNN (ours) & \textbf{.714} & \textbf{.705} & \textbf{.451} & \textbf{.659} & \textbf{.460} & \textbf{.495} & \textbf{.467} & \textbf{.458} & \textbf{.633} & .374 & .419 & \textbf{.336} \\
\bottomrule
\end{tabular}}
\label{tab:inductive}
\end{table*}

\noindent\textbf{Datasets.} For the inductive setting, we adopt the inductive splits of WN18RR, FB15k-237, and NELL-995 following prior work~\citep{GraIL,redgnn, diffusione}, where training and test sets contain disjoint entity sets, requiring the model to generalize to entities unseen during training.

\noindent\textbf{Baselines.} We compare Q-GNN with rule-based and GNN-based inductive KGC methods, including DRUM~\citep{DRUM}, GraIL~\citep{GraIL}, CoMPILE~\citep{compile}, RED-GNN~\citep{redgnn}, NBFNet~\citep{nbfnet}, and DiffusionE~\citep{diffusione}.

\noindent\textbf{Results.} As shown in Table~\ref{tab:inductive}, Q-GNN achieves the best performance on 9 out of 12 splits across the three datasets, including all four splits of WN18RR and FB15k-237. The improvement is particularly pronounced on FB15k-237 V1, where Q-GNN exceeds the strongest baseline by 0.089 in MRR. While DiffusionE attains better results on NELL-995 V2 and V3, Q-GNN remains competitive on these splits and dominates the overall comparison. Due to space constraints, Hit@10 results under the inductive setting are reported in Appendix~\ref{app:inductive_h10}.

\subsection{Ablation Study}
\begin{wraptable}{r}{0.42\linewidth}
\vspace{-0.8em}
\centering
\caption{Results of different variants.}
\label{tab:variant_results}
\small
\setlength{\tabcolsep}{4pt}
\begin{tabular}{lcccc}
\toprule
Methods & MRR & Hit@1 & Hit@3 & Hit@10 \\
\midrule
w/o e  & .547 & .497 & .572 & .638 \\
w/o c  & .563 & .513 & .585 & .655 \\
w/o t  & .564 & .513 & .589 & .658 \\
Q-GNN  & \textbf{.572} & \textbf{.521} & \textbf{.594} & \textbf{.665} \\
\bottomrule
\end{tabular}
\vspace{-0.8em}
\end{wraptable}
Table~\ref{tab:variant_results} presents performance comparisons among the full model and three ablation variants: ``w/o e'', ``w/o c'' and ``w/o t''.

\noindent\textbf{Analysis of w/o e.} For the variant ``w/o e'', we completely remove the query-conditioned modulation in the Query-Oriented Encoder. Specifically, we drop the FiLM mechanism so that messages are computed from the base message alone, without any conditioning signal from the query entity. As a result, the candidate encoding process is no longer influenced by the query entity in any explicit form, and the role of the query entity degenerates to that of a pure structural anchor --- the same paradigm followed by prior query-aware GNNs. As shown in Table~\ref{tab:variant_results}, ``w/o e'' suffers the largest performance drop among all variants, demonstrating that explicitly incorporating the query entity into propagation is critical for accurate reasoning.

\noindent\textbf{Analysis of w/o c.} For the variant ``w/o c'', we remove the Type-Guided Context Filter and the Query Context Encoder, and replace the query context embedding $\mathbf{q}_h$ in the FiLM module with learnable query entity embeddings. Compared with ``w/o e'', this variant retains the ability to explicitly distinguish between query entities through FiLM, but the conditioning signal no longer carries contextual information. As shown in Table~\ref{tab:variant_results}, ``w/o c'' outperforms ``w/o e'' but remains consistently lower than the full model. This indicates that the contribution of the Query-Oriented Encoder is not solely attributable to the explicit identification of the query entity, but also relies on the rich contextual semantics produced by the Query Context Encoder, which a learnable embedding alone cannot capture.

\noindent\textbf{Analysis of w/o t.} For the variant ``w/o t'', we remove all components related to entity type information: the type-aware attention is reduced to the form used in RED-GNN (computing attention weights based only on source node representations, relation embeddings, and query relation embeddings), and the type-specific decoder is replaced by a unified scoring matrix shared across all queries. The resulting performance drop confirms that entity type, as a complementary semantic dimension of the query entity, provides additional information beyond the structural context, and is therefore essential to the overall design.

\section{Related work}
Over the past decade, a wide range of advanced methods for knowledge graph completion have been proposed to improve performance. Representative approaches can be broadly categorized into three categories, namely triplet-based methods, path-based methods, and GNN-based methods.

\noindent\textbf{Triplet-based methods} focus on learning representations of entities and relations directly from observed triplets by designing scoring functions to evaluate their plausibility, which can be generally categorized into translation-based models and semantic matching models. The former interprets relations as translation operations from head entities to tail entities, exemplified by transE~\citep{transe}, CompoundE~\citep{compoundE}, RotateE~\citep{rotate}, HousE~\citep{house}, while the latter employs similarity-based scoring functions, such as distmult~\citep{distmult}, TuckER~\citep{TuckER}, ComplEx~\citep{complex}. Despite their computational efficiency and simplicity, these methods struggle to adequately capture the rich structural dependencies within knowledge graphs.

\noindent\textbf{Path-based methods} perform link prediction by learning multi-hop relational paths between entities. Early studies, such as DeepPath~\citep{DeepPath} and MINERVA~\citep{MINERVA}, adopt reinforcement learning–based path search strategies, where agents are guided by reward mechanisms to discover efficient reasoning paths in the KG. More recent work, such as CURL~\citep{curl}, introduces a dual-agent mechanism to alleviate the tendency of a single agent to fall into local optima during long path exploration. Another line of research focuses on mining logical rules implicit in paths. For example, DRUM~\citep{DRUM} implements end-to-end learning of logic rules using bidirectional RNN based on NeuralLP~\citep{Neural-LP}. RNNLogic~\citep{rnnlogic} introduces an EM framework to iteratively optimize rule induction and inference. More recently, TCRA~\citep{tcra} integrates topological context learning with rule-based constraints to guide representation learning through logical rules. While these approaches are capable of capturing long-range dependencies between entities and offer improved interpretability, they remain limited in exploiting complex subgraph structures and are vulnerable to path sparsity.

\noindent\textbf{GNN-based methods} have recently achieved remarkable performance in knowledge graph completion by exploiting the ability of GNNs to aggregate neighborhood information. Through iterative message passing and layer-wise aggregation, nodes integrate information from their neighbors to learn higher-order structural representations, which are then used by decoders or scoring functions for link prediction. Traditional GNN methods, such as R-GCN~\citep{rgcn}, CompGCN~\citep{compgcn}, MGTCA~\citep{MGTCA}, and LORE~\citep{lore}, perform message passing over the entire knowledge graph. While effective at learning global representations, these approaches uniformly propagate information across all nodes, ignoring the varying relevance of local evidence for different queries. Consequently, they struggle to identify query-relevant information, leading to limited performance on complex queries. Recent work has explored local subgraph-based approaches. GraIL~\citep{GraIL} pioneers the extraction of enclosing subgraphs between query entities and individual candidate entities to learn query-specific evidences, but this leads to huge computational overhead. Recent methods~\citep{redgnn,diffusione} have begun to extract local neighborhoods centered on the query, on which the query-specific representations of all candidate entities are learned. However, these approaches neglecting the rich contextual semantics contained in query entities, which provide important guidance for prediction.
\section{Conclusion}
In this paper, we propose Q-GNN, a query-conditioned approach for knowledge graph completion. Unlike existing query subgraph learning methods that regard query entities merely as structural anchors, Q-GNN explicitly incorporates query entity information into reasoning from two perspectives: structural context, captured by a dedicated Query Context Encoder and injected into candidate encoding through query-conditioned modulation, and semantic type, integrated via a type-aware attention mechanism and a type-specific decoder. Extensive experiments under both transductive and inductive settings demonstrate the superior performance of Q-GNN, confirming that explicitly leveraging query entity information is essential for accurate knowledge graph reasoning. One limitation of our approach is that the per-query complexity of Q-GNN scales with the depth of message passing, which may pose challenges when applied to extremely large-scale knowledge graphs. Extending Q-GNN to such settings is left for future work.
\bibliographystyle{unsrtnat}
\bibliography{references}

\begin{thebibliography}{43}
\providecommand{\natexlab}[1]{#1}
\providecommand{\url}[1]{\texttt{#1}}
\expandafter\ifx\csname urlstyle\endcsname\relax
  \providecommand{\doi}[1]{doi: #1}\else
  \providecommand{\doi}{doi: \begingroup \urlstyle{rm}\Url}\fi

\bibitem[Malusare et~al.(2025)Malusare, Punyamoorty, and Aggarwal]{drug1}
Aditya Malusare, Vineet Punyamoorty, and Vaneet Aggarwal.
\newblock Augmenting generative models with biomedical knowledge graphs improves targeted drug discovery.
\newblock In \emph{IEEE Transactions on Artificial Intelligence (IEEE TAI)}, 2025.

\bibitem[Hoang et~al.(2024)Hoang, Sbodio, Galindo, Zayats, Fern{\'{a}}ndez{-}D{\'{\i}}az, Valls, Picco, Berrospi, and L{\'{o}}pez]{drug2}
Thanh~Lam Hoang, Marco~Luca Sbodio, Marcos~Mart{\'{\i}}nez Galindo, Mykhaylo Zayats, Ra{\'{u}}l Fern{\'{a}}ndez{-}D{\'{\i}}az, Victor Valls, Gabriele Picco, Cesar Berrospi, and Vanessa L{\'{o}}pez.
\newblock Knowledge enhanced representation learning for drug discovery.
\newblock In \emph{Thirty-Eighth {AAAI} Conference on Artificial Intelligence}, pages 10544--10552, 2024.

\bibitem[Kwon et~al.(2024)Kwon, Ahn, and Seo]{recommend1}
Junhyuk Kwon, Seokho Ahn, and Young{-}Duk Seo.
\newblock Reckg: Knowledge graph for recommender systems.
\newblock In \emph{Proceedings of the 39th {ACM/SIGAPP} Symposium on Applied Computing}, pages 600--607, 2024.

\bibitem[Hu et~al.(2025)Hu, Li, Jiao, Nakagawa, Deng, Cai, Zhou, and Ren]{recommend2}
Zheng Hu, Zhe Li, Ziyun Jiao, Satoshi Nakagawa, Jiawen Deng, Shimin Cai, Tao Zhou, and Fuji Ren.
\newblock Bridging the user-side knowledge gap in knowledge-aware recommendations with large language models.
\newblock In \emph{AAAI-25}, pages 11799--11807, 2025.

\bibitem[Gao et~al.(2025)Gao, Li, Yuan, Li, Jianzhuo, Zhang, Jin, Li, and Hu]{qa1}
Guangze Gao, Zixuan Li, Chunfeng Yuan, Jiawei Li, Wu~Jianzhuo, Yuehao Zhang, Xiaolong Jin, Bing Li, and Weiming Hu.
\newblock {D}-{RAG}: Differentiable retrieval-augmented generation for knowledge graph question answering.
\newblock In \emph{Proceedings of the 2025 Conference on Empirical Methods in Natural Language Processing}, pages 35398--35417, 2025.

\bibitem[Shen et~al.(2025)Shen, Wang, Zhang, and Cambria]{qa2}
Tiesunlong Shen, Jin Wang, Xuejie Zhang, and Erik Cambria.
\newblock Reasoning with trees: Faithful question answering over knowledge graph.
\newblock In \emph{Proceedings of the 31st International Conference on Computational Linguistics}, pages 3138--3157, 2025.

\bibitem[Vashishth et~al.(2020)Vashishth, Sanyal, Nitin, and Talukdar]{compgcn}
Shikhar Vashishth, Soumya Sanyal, Vikram Nitin, and Partha~P. Talukdar.
\newblock Composition-based multi-relational graph convolutional networks.
\newblock In \emph{8th International Conference on Learning Representations}, 2020.

\bibitem[Liu et~al.(2021)Liu, Tan, Chen, and Lin]{ragat}
Xiyang Liu, Huobin Tan, Qinghong Chen, and Guangyan Lin.
\newblock {RAGAT:} relation aware graph attention network for knowledge graph completion.
\newblock \emph{{IEEE} Access}, 9:\penalty0 20840--20849, 2021.

\bibitem[Zhang and Yao(2022)]{redgnn}
Yongqi Zhang and Quanming Yao.
\newblock Knowledge graph reasoning with relational digraph.
\newblock In \emph{{WWW} '22: The {ACM} Web Conference 2022}, pages 912--924, 2022.

\bibitem[Cao et~al.(2024)Cao, Li, Wang, and Li]{diffusione}
Zongsheng Cao, Jing Li, Zigan Wang, and Jinliang Li.
\newblock Diffusione: Reasoning on knowledge graphs via diffusion-based graph neural networks.
\newblock In \emph{Proceedings of the 30th {ACM} {SIGKDD} Conference on Knowledge Discovery and Data Mining}, pages 222--230, 2024.

\bibitem[Sadeghian et~al.(2019)Sadeghian, Armandpour, Ding, and Wang]{DRUM}
Ali Sadeghian, Mohammadreza Armandpour, Patrick Ding, and Daisy~Zhe Wang.
\newblock {DRUM:} end-to-end differentiable rule mining on knowledge graphs.
\newblock In \emph{Advances in Neural Information Processing Systems 32: Annual Conference on Neural Information Processing Systems 2019}, pages 15321--15331, 2019.

\bibitem[Xu et~al.(2020)Xu, Feng, Jiang, Xie, Sun, and Deng]{inverse2}
Xiaoran Xu, Wei Feng, Yunsheng Jiang, Xiaohui Xie, Zhiqing Sun, and Zhi{-}Hong Deng.
\newblock Dynamically pruned message passing networks for large-scale knowledge graph reasoning.
\newblock In \emph{8th International Conference on Learning Representations}, 2020.

\bibitem[Perez et~al.(2018)Perez, Strub, de~Vries, Dumoulin, and Courville]{film}
Ethan Perez, Florian Strub, Harm de~Vries, Vincent Dumoulin, and Aaron~C. Courville.
\newblock Film: Visual reasoning with a general conditioning layer.
\newblock In \emph{Proceedings of the Thirty-Second {AAAI} Conference on Artificial Intelligence}, pages 3942--3951, 2018.

\bibitem[Su et~al.(2024)Su, Wang, Miao, and Cui]{ap}
Zhixiang Su, Di~Wang, Chunyan Miao, and Lizhen Cui.
\newblock Anchoring path for inductive relation prediction in knowledge graphs.
\newblock In \emph{Thirty-Eighth {AAAI} Conference on Artificial Intelligence}, pages 9011--9018, 2024.

\bibitem[Lacroix et~al.(2018)Lacroix, Usunier, and Obozinski]{loss}
Timoth{\'{e}}e Lacroix, Nicolas Usunier, and Guillaume Obozinski.
\newblock Canonical tensor decomposition for knowledge base completion.
\newblock In \emph{Proceedings of the 35th International Conference on Machine Learning}, volume~80 of \emph{Proceedings of Machine Learning Research}, pages 2869--2878, 2018.

\bibitem[Bordes et~al.(2013)Bordes, Usunier, Garc{\'{\i}}a{-}Dur{\'{a}}n, Weston, and Yakhnenko]{transe}
Antoine Bordes, Nicolas Usunier, Alberto Garc{\'{\i}}a{-}Dur{\'{a}}n, Jason Weston, and Oksana Yakhnenko.
\newblock Translating embeddings for modeling multi-relational data.
\newblock In \emph{Advances in Neural Information Processing Systems 26: 27th Annual Conference on Neural Information Processing Systems 2013}, pages 2787--2795, 2013.

\bibitem[Yang et~al.(2017)Yang, Yang, and Cohen]{Neural-LP}
Fan Yang, Zhilin Yang, and William~W. Cohen.
\newblock Differentiable learning of logical rules for knowledge base reasoning.
\newblock In \emph{Advances in Neural Information Processing Systems 30: Annual Conference on Neural Information Processing Systems 2017}, pages 2319--2328, 2017.

\bibitem[Kingma and Ba(2015)]{adam}
Diederik~P. Kingma and Jimmy Ba.
\newblock Adam: {A} method for stochastic optimization.
\newblock In \emph{3rd International Conference on Learning Representations}, 2015.

\bibitem[Dettmers et~al.(2018)Dettmers, Minervini, Stenetorp, and Riedel]{wn18rr}
Tim Dettmers, Pasquale Minervini, Pontus Stenetorp, and Sebastian Riedel.
\newblock Convolutional 2d knowledge graph embeddings.
\newblock In \emph{Proceedings of the Thirty-Second {AAAI} Conference on Artificial Intelligence}, pages 1811--1818, 2018.

\bibitem[Toutanova and Chen(2015)]{fb15k237}
Kristina Toutanova and Danqi Chen.
\newblock Observed versus latent features for knowledge base and text inference.
\newblock In \emph{Proceedings of the 3rd Workshop on Continuous Vector Space Models and their Compositionality}, pages 57--66, 2015.

\bibitem[Xiong et~al.(2017{\natexlab{a}})Xiong, Hoang, and Wang]{nell995}
Wenhan Xiong, Thien Hoang, and William~Yang Wang.
\newblock {D}eep{P}ath: A reinforcement learning method for knowledge graph reasoning.
\newblock In \emph{Proceedings of the 2017 Conference on Empirical Methods in Natural Language Processing}, pages 564--573, 2017{\natexlab{a}}.

\bibitem[Kok and Domingos(2007)]{umls}
Stanley Kok and Pedro Domingos.
\newblock Statistical predicate invention.
\newblock In \emph{Proceedings of the 24th International Conference on Machine Learning}, ICML '07, page 433–440, 2007.

\bibitem[Sun et~al.(2019)Sun, Deng, Nie, and Tang]{rotate}
Zhiqing Sun, Zhi{-}Hong Deng, Jian{-}Yun Nie, and Jian Tang.
\newblock Rotate: Knowledge graph embedding by relational rotation in complex space.
\newblock In \emph{7th International Conference on Learning Representations}, 2019.

\bibitem[Li et~al.(2022)Li, Zhao, Li, He, Wang, Liu, Sun, Wang, Deng, Shen, Xie, and Zhang]{house}
Rui Li, Jianan Zhao, Chaozhuo Li, Di~He, Yiqi Wang, Yuming Liu, Hao Sun, Senzhang Wang, Weiwei Deng, Yanming Shen, Xing Xie, and Qi~Zhang.
\newblock House: Knowledge graph embedding with householder parameterization.
\newblock In \emph{International Conference on Machine Learning}, volume 162 of \emph{Proceedings of Machine Learning Research}, pages 13209--13224, 2022.

\bibitem[Ge et~al.(2023)Ge, Wang, Wang, and Kuo]{compoundE}
Xiou Ge, Yun{-}Cheng Wang, Bin Wang, and C.{-}C.~Jay Kuo.
\newblock Compounding geometric operations for knowledge graph completion.
\newblock In \emph{Proceedings of the 61st Annual Meeting of the Association for Computational Linguistics (Volume 1: Long Papers)}, pages 6947--6965, 2023.

\bibitem[Jiang et~al.(2024)Jiang, Wang, Xue, and Yang]{mshe}
Dan Jiang, Ronggui Wang, Lixia Xue, and Juan Yang.
\newblock Multisource hierarchical neural network for knowledge graph embedding.
\newblock \emph{Expert Syst. Appl.}, 237\penalty0 (Part {B}):\penalty0 121446, 2024.

\bibitem[Qu et~al.(2021)Qu, Chen, Xhonneux, Bengio, and Tang]{rnnlogic}
Meng Qu, Jun{-}Kun Chen, Louis{-}Pascal A.~C. Xhonneux, Yoshua Bengio, and Jian Tang.
\newblock Rnnlogic: Learning logic rules for reasoning on knowledge graphs.
\newblock In \emph{9th International Conference on Learning Representations}, 2021.

\bibitem[Guo et~al.(2024)Guo, Zhang, Li, Xue, and Niu]{tcra}
Jingtao Guo, Chunxia Zhang, Lingxi Li, Xiaojun Xue, and Zhendong Niu.
\newblock A unified joint approach with topological context learning and rule augmentation for knowledge graph completion.
\newblock In \emph{Findings of the Association for Computational Linguistics}, pages 13686--13696, 2024.

\bibitem[Shang et~al.(2024)Shang, Zhao, Liu, and Wang]{MGTCA}
Bin Shang, Yinliang Zhao, Jun Liu, and Di~Wang.
\newblock Mixed geometry message and trainable convolutional attention network for knowledge graph completion.
\newblock In \emph{Thirty-Eighth {AAAI} Conference on Artificial Intelligence}, pages 8966--8974, 2024.

\bibitem[Song et~al.(2024)Song, Duan, Cao, and Lin]{MR-SAGCN}
Jiawei Song, Zongtao Duan, Jianrong Cao, and Yun Lin.
\newblock Multi-relational semantic awareness for knowledge graph completion.
\newblock In \emph{2024 9th International Conference on Computer and Communication Systems (ICCCS)}, pages 107--113, 2024.

\bibitem[Wang et~al.(2025)Wang, Huo, Zhang, and Zhang]{lore}
Suixue Wang, Weiliang Huo, Shilin Zhang, and Qingchen Zhang.
\newblock Higher-order logical knowledge representation learning.
\newblock In \emph{Proceedings of the Thirty-Fourth International Joint Conference on Artificial Intelligence}, pages 3398--3406, 2025.

\bibitem[Teru et~al.(2020)Teru, Denis, and Hamilton]{GraIL}
Komal~K. Teru, Etienne~G. Denis, and William~L. Hamilton.
\newblock Inductive relation prediction by subgraph reasoning.
\newblock In \emph{Proceedings of the 37th International Conference on Machine Learning}, volume 119 of \emph{Proceedings of Machine Learning Research}, pages 9448--9457, 2020.

\bibitem[Mai et~al.(2021)Mai, Zheng, Yang, and 0001]{compile}
Sijie Mai, Shuangjia Zheng, Yuedong Yang, and Haifeng~Hu 0001.
\newblock Communicative message passing for inductive relation reasoning.
\newblock In \emph{Thirty-Fifth AAAI Conference on Artificial Intelligence, AAAI 2021, Thirty-Third Conference on Innovative Applications of Artificial Intelligence, IAAI 2021, The Eleventh Symposium on Educational Advances in Artificial Intelligence, EAAI 2021, Virtual Event, February 2-9, 2021}, pages 4294--4302, 2021.

\bibitem[Zhu et~al.(2021)Zhu, Zhang, Xhonneux, and Tang]{nbfnet}
Zhaocheng Zhu, Zuobai Zhang, Louis-Pascal Xhonneux, and Jian Tang.
\newblock Neural bellman-ford networks: A general graph neural network framework for link prediction.
\newblock In \emph{Advances in Neural Information Processing Systems}, pages 29476--29490, 2021.

\bibitem[Yang et~al.(2015)Yang, Yih, He, Gao, and Deng]{distmult}
Bishan Yang, Wen{-}tau Yih, Xiaodong He, Jianfeng Gao, and Li~Deng.
\newblock Embedding entities and relations for learning and inference in knowledge bases.
\newblock In \emph{3rd International Conference on Learning Representations}, 2015.

\bibitem[Balazevic et~al.(2019)Balazevic, Allen, and Hospedales]{TuckER}
Ivana Balazevic, Carl Allen, and Timothy~M. Hospedales.
\newblock Tucker: Tensor factorization for knowledge graph completion.
\newblock In \emph{Proceedings of the 2019 Conference on Empirical Methods in Natural Language Processing and the 9th International Joint Conference on Natural Language Processing}, pages 5184--5193, 2019.

\bibitem[Trouillon et~al.(2016)Trouillon, Welbl, Riedel, Gaussier, and Bouchard]{complex}
Th{\'{e}}o Trouillon, Johannes Welbl, Sebastian Riedel, {\'{E}}ric Gaussier, and Guillaume Bouchard.
\newblock Complex embeddings for simple link prediction.
\newblock In \emph{Proceedings of the 33nd International Conference on Machine Learning}, volume~48 of \emph{{JMLR} Workshop and Conference Proceedings}, pages 2071--2080, 2016.

\bibitem[Xiong et~al.(2017{\natexlab{b}})Xiong, Hoang, and Wang]{DeepPath}
Wenhan Xiong, Thien Hoang, and William~Yang Wang.
\newblock Deeppath: {A} reinforcement learning method for knowledge graph reasoning.
\newblock In \emph{Proceedings of the 2017 Conference on Empirical Methods in Natural Language Processing}, pages 564--573, 2017{\natexlab{b}}.

\bibitem[Das et~al.(2018)Das, Dhuliawala, Zaheer, Vilnis, Durugkar, Krishnamurthy, Smola, and McCallum]{MINERVA}
Rajarshi Das, Shehzaad Dhuliawala, Manzil Zaheer, Luke Vilnis, Ishan Durugkar, Akshay Krishnamurthy, Alex Smola, and Andrew McCallum.
\newblock Go for a walk and arrive at the answer: Reasoning over paths in knowledge bases using reinforcement learning.
\newblock In \emph{6th International Conference on Learning Representations}, 2018.

\bibitem[Zhang et~al.(2022)Zhang, Yuan, Liu, Lin, and Xiong]{curl}
Denghui Zhang, Zixuan Yuan, Hao Liu, Xiaodong Lin, and Hui Xiong.
\newblock Learning to walk with dual agents for knowledge graph reasoning.
\newblock In \emph{Thirty-Sixth {AAAI} Conference on Artificial Intelligence}, pages 5932--5941, 2022.

\bibitem[Schlichtkrull et~al.(2018)Schlichtkrull, Kipf, Bloem, van~den Berg, Titov, and Welling]{rgcn}
Michael~Sejr Schlichtkrull, Thomas~N. Kipf, Peter Bloem, Rianne van~den Berg, Ivan Titov, and Max Welling.
\newblock Modeling relational data with graph convolutional networks.
\newblock In \emph{The Semantic Web - 15th International Conference}, volume 10843 of \emph{Lecture Notes in Computer Science}, pages 593--607, 2018.

\bibitem[Zhao and Akoglu(2020)]{oversmooth1}
Lingxiao Zhao and Leman Akoglu.
\newblock Pairnorm: Tackling oversmoothing in gnns.
\newblock In \emph{8th International Conference on Learning Representations}, 2020.

\bibitem[Li et~al.(2018)Li, Han, and Wu]{oversmooth2}
Qimai Li, Zhichao Han, and Xiao{-}Ming Wu.
\newblock Deeper insights into graph convolutional networks for semi-supervised learning.
\newblock In \emph{Proceedings of the Thirty-Second {AAAI} Conference on Artificial Intelligence}, pages 3538--3545, 2018.

\end{thebibliography}

\newpage
\appendix
\setcounter{equation}{0}
\renewcommand{\theequation}{\arabic{equation}}
\renewcommand{\theHequation}{appendix.\arabic{equation}}

\section{Entity Type Annotation}
\label{app:typeannot}
We perform entity type annotation by calling DeepSeek-V3 through the API. Specifically, we customize the system prompt for the large language model (LLM) and input the target entity along with randomly sampled related triples (i.e., triples where the entity serves as either the head or tail entity) to the LLM. This enables the model to make type determinations based on both facts from the knowledge graph and its own general knowledge capabilities, and we require the model to return results in a formatted form.

The table~\ref{tab:prompt_structure} shows an example prompt for entity type annotation on the FB15k-237. It can be observed that the textual descriptions of some relations in the FB15k-237 dataset contain type labels for entities on both ends of the relation. However, since the same relation in the dataset may correspond to multiple tail entities with different types, it is difficult to obtain accurate entity type annotations by relying solely on relation text. For example, the tail entities of the relation '/location/location/contains' could be location-type entities as appearing in the relation text, but in the actual data, there are also numerous instances where entities of different types such as country, city, university, etc., appear as its tail entities. Therefore, we need to leverage the general knowledge capabilities of large language models for more precise type determination.

Notably, in order to improve the consistency of type annotation, we retain the complete dialogue from the first appearance of each new type as historical context and provide it to the large language model in subsequent requests. This aims to guide the LLM to assign the same type label to entities of the same category, thereby ensuring uniformity and standardization of the annotation results.

\begin{table}[ht]
\centering
\small
\caption{Prompt Example for FB15k-237.}
\begin{tabular}{@{}lp{0.72\columnwidth}@{}}
\toprule
 & \textbf{Content} \\
\midrule
\textbf{System} & 
\begin{minipage}[t]{\linewidth}
You are an expert in knowledge graph entity typing.

Given an entity and a set of RDF-style triples involving the entity, please determine the most appropriate and concise type of the given entity based on its semantic role across the triples.

The type should be:
\begin{itemize}[leftmargin=*, nosep]
    \item Avoid overly specific or descriptive types
    \item Prefer using category term appears in the relation names
\end{itemize}
Only return your result in the following JSON format:
\begin{verbatim}
{
  "type": "<TYPE_NAME>"
}
\end{verbatim}
\end{minipage} \\
\midrule
\textbf{Context} & 
\begin{minipage}[t]{\linewidth}
\textit{User:}

Entity: ``Schleswig-Holstein''

Triples:
\begin{itemize}[leftmargin=*, nosep]
    \item (Schleswig-Holstein, \url{/location/administrative_division/country}, Germany)
    \item (Schleswig-Holstein, \url{/base/aareas/schema/administrative_area/capital}, Kiel)
\end{itemize}

\textit{Assistant:} 

\texttt{\{"type": "administrative\_area"\}}

\textit{User:} 

Entity: ``Gary Rydstrom''

Triples:
\begin{itemize}[leftmargin=*, nosep]
    \item (Gary Rydstrom, \url{/award/award_nominee/award_nominations./award/award_nomination/award_nominee}, Gary Summers)
\end{itemize}

\textit{Assistant:} 

\texttt{\{"type": "person"\}}
\end{minipage} \\
\midrule
\textbf{Input} & 
\begin{minipage}[t]{\linewidth}
Entity: ``Actor-GB''

Triples:
\begin{itemize}[leftmargin=*, nosep]
    \item (Treat Williams, \url{/people/person/profession}, Actor-GB)
    \item (Acting, \url{/people/profession/specialization_of}, Actor-GB)
\end{itemize}
\end{minipage} \\
\bottomrule
\end{tabular}
\label{tab:prompt_structure}
\end{table}

\section{Analysis of LLM-Generated Type Labels}
\label{app:type-quality}
We conduct three experiments to evaluate the reliability and effectiveness of the type labels produced by our LLM-based annotation pipeline.
\subsection{Accuracy of LLM-Generated Types}
To verify the reliability of the inferred type labels, we randomly sample 3,000 entities from FB15k-237 and use Claude Sonnet 4.5 as an external evaluator to judge whether each assigned type is appropriate. The annotation accuracy reaches 98.03\%, indicating that LLM-based commonsense entity typing produces highly reliable type labels.
\subsection{Robustness to Type Label Noise}
To evaluate the sensitivity of Q-GNN to type label quality, we randomly replace the type of 30\% of the entities in UMLS with random labels drawn from the type vocabulary and re-train the model. As shown in Table~\ref{tab:typenoise}, this perturbation leads to a 3.9\% drop in MRR and a 4.1\% drop in Hit@1 relative to the clean setting, confirming that type quality has a direct impact on model performance.
\begin{table}[H]
\centering
\caption{Effect of type label noise on UMLS.}
\begin{tabular}{lcc}
\toprule
Setting & MRR & Hit@1 \\
\midrule
Clean LLM types          & .980 & .973 \\
30\% types replaced      & .941 & .932 \\
\bottomrule
\end{tabular}
\label{tab:typenoise}
\end{table}

\subsection{Effect of LLM-Based Type Refinement}
Since NELL-995 is one of the few benchmark datasets that provides original entity type annotations, we also experiment with using its original type labels. As shown in Table~\ref{tab:typesource}, replacing LLM-generated types with the original NELL-995 annotations leads to a 4.1\% drop in MRR and a 3.2\% drop in Hit@1. This is likely because the original annotations suffer from inconsistent label granularity and noise, whereas LLM-refined types provide cleaner and more uniform labels.
\begin{table}[H]
\centering
\caption{Effect of type source on NELL-995.}
\begin{tabular}{lcc}
\toprule
Type Source & MRR & Hit@1 \\
\midrule
LLM-based     & .570 & .503 \\
Original NELL-995 & .529 & .471 \\
\bottomrule
\end{tabular}
\label{tab:typesource}
\end{table}
\section{Semantic Embedding Generation}
\label{app:sematicembed}
Here we describe how we obtain the type semantic embeddings used in Section 4.1. Specifically, we input the descriptive text of each entity in the format [entity name]:[entity description] into the pretrained Qwen3-Embedding-8B to obtain entity text embeddings. Subsequently, we average the embeddings of entities belonging to the same type to obtain the semantic embedding for that type:
\begin{equation}
    \mathbf{e}_t = \frac{ 1}{|E_t|} \sum_{e \in E_t} \mathbf{emb}(e), 
\end{equation}
where $E_t$ denotes the set of all entities belonging to type $t$, and $\mathbf{emb}(e)$ represents the text embedding vector of entity $e$ encoded by Qwen3-Embedding-8B. Thus, we obtain type semantic embeddings that aggregate the textual information of entities within the same type.

\section{Proof of Lemma 1}
\label{app:proof-lemma1}
Denote the pre-LayerNorm vector by
\begin{equation}
\mathbf{u}(\mathbf{q}_h) \;:=\; \big(\mathbf{1} + \tanh(\boldsymbol{\gamma}_q^{(l)}(\mathbf{q}_h))\big) \odot \mathbf{m}_{e_h \to e_t}'^{(l)} \;+\; \boldsymbol{\beta}_q^{(l)}(\mathbf{q}_h),
\end{equation}
where the base message $\mathbf{m}_{e_h \to e_t}'^{(l)}$ does not depend on $\mathbf{q}_h$ (since $\mathbf{h}_{e_h}^{(l-1)}$ and $\mathbf{e}_r^{(l)}$ are both fixed by hypothesis), and $(\boldsymbol{\gamma}_q^{(l)}, \boldsymbol{\beta}_q^{(l)}) = \mathrm{MLP}_{\mathrm{FiLM}}^{(l)}([\mathbf{q}_h \| \mathbf{e}_{r_q}])$ depends on $\mathbf{q}_h$ through the FiLM-MLP. Let $P := I - \frac{1}{d}\mathbf{1}\mathbf{1}^\top$ denote the mean-removal projection.

\textbf{Assumption 1 (Non-degeneracy of the FiLM modulation).}
(i) The centered pre-LayerNorm vector $P\mathbf{u}(\mathbf{q}_h)$ is non-constant as a function of $\mathbf{q}_h$;
(ii) the LayerNorm scale parameter $\boldsymbol{\eta}^{(l)}$ has all entries non-zero.

\begin{proof}[Proof of Lemma~1]
Let $v(\mathbf{q}_h) := \mathrm{LayerNorm}(\mathbf{u}(\mathbf{q}_h)) = \tilde{\mathbf{m}}_{e_h \to e_t}'^{(l)}(\mathbf{q}_h)$. We show that $v$ is continuous and non-constant, from which the conclusion of the lemma follows.

\textbf{Continuity of $v$.} $\mathbf{u}$ is continuous as a composition of continuous operations. Since $\mu(\mathbf{u})\mathbf{1} = \frac{1}{d}\mathbf{1}\mathbf{1}^\top \mathbf{u}$ and $\sigma^2(\mathbf{u}) = \frac{1}{d}\|P\mathbf{u}\|^2$, LayerNorm admits the equivalent form
\begin{equation}
\mathrm{LayerNorm}(\mathbf{u}) \;=\; \frac{P\mathbf{u}}{\sqrt{\tfrac{1}{d}\|P\mathbf{u}\|^2 + \epsilon}} \odot \boldsymbol{\eta}^{(l)} + \mathbf{b}^{(l)}.
\label{eq:pu}
\end{equation}
Since $\epsilon > 0$ keeps the denominator bounded away from zero, LayerNorm is continuous; composed with $\mathbf{u}$, the map $v$ is continuous.

\textbf{Injectivity of LayerNorm on $P\mathbf{u}$.} By Eq.~\eqref{eq:pu}, LayerNorm acts on $P\mathbf{u}$ through the normalization map $\Phi(x) := x / \sqrt{\frac{1}{d}\|x\|^2 + \epsilon}$. Suppose $\Phi(x) = \Phi(y)$, i.e.,
\begin{equation}
\frac{x}{\sqrt{\tfrac{1}{d}\|x\|^2 + \epsilon}} \;=\; \frac{y}{\sqrt{\tfrac{1}{d}\|y\|^2 + \epsilon}}.
\label{eq:equ}
\end{equation}
Taking norms on both sides gives
\begin{equation}
\frac{\|x\|}{\sqrt{\tfrac{1}{d}\|x\|^2 + \epsilon}} \;=\; \frac{\|y\|}{\sqrt{\tfrac{1}{d}\|y\|^2 + \epsilon}}.
\end{equation}
The function $r \mapsto r/\sqrt{r^2/d + \epsilon}$ is strictly increasing on $r \geq 0$, so $\|x\| = \|y\|$, and substituting back into Eq.~\eqref{eq:equ} yields $x = y$. Hence $\Phi$ is injective. Combined with Assumption 1(ii), all entries of $\boldsymbol{\eta}^{(l)}$ being non-zero ensures that the Hadamard product is also injective, so
\begin{equation}
\mathrm{LayerNorm}(\mathbf{u}^{(1)}) = \mathrm{LayerNorm}(\mathbf{u}^{(2)}) \;\iff\; P\mathbf{u}^{(1)} = P\mathbf{u}^{(2)}.
\end{equation}

\textbf{Non-constancy of $v$.} By Assumption 1(i), $P\mathbf{u}$ is non-constant; by the injectivity above, $v$ is also non-constant.

\textbf{Conclusion.} Since $v$ is continuous, the map $g(\mathbf{q}_h^{(1)}, \mathbf{q}_h^{(2)}) := v(\mathbf{q}_h^{(1)}) - v(\mathbf{q}_h^{(2)})$ is continuous, so the set
\begin{equation}
S := \big\{ (\mathbf{q}_h^{(1)}, \mathbf{q}_h^{(2)}) : v(\mathbf{q}_h^{(1)}) \neq v(\mathbf{q}_h^{(2)}) \big\}
\end{equation}
is precisely the locus where $g$ takes non-zero values — that is, the preimage of the open set $\mathbb{R}^d \setminus \{\mathbf{0}\}$ under the continuous map $g$ — so $S$ is open. Since $v$ is non-constant, there exist $\mathbf{q}_h^{(1)}, \mathbf{q}_h^{(2)}$ with $v(\mathbf{q}_h^{(1)}) \neq v(\mathbf{q}_h^{(2)})$, so $S$ is non-empty. Any $(\mathbf{q}_h^{(1)}, \mathbf{q}_h^{(2)}) \in S$ then yields $\tilde{\mathbf{m}}_{e_h \to e_t}'^{(l)}(\mathbf{q}_h^{(1)}) \neq \tilde{\mathbf{m}}_{e_h \to e_t}'^{(l)}(\mathbf{q}_h^{(2)})$.
\end{proof}

\section{Complexity Analysis}
\label{app:complexity}
We analyze the per-query computational complexity of Q-GNN, which consists of $L_c$ context encoding layers and $L_1 + L_2$ candidate encoding layers. Let $d$ denote the hidden dimension, $\bar{D}$ is the average degree of the knowledge graph, and $\bar{D}_c$ is the average degree of the filtered context graph used by the Query Context Encoder.

\noindent\textbf{Query Context Encoder.} The Query Context Encoder expands the context region from the query entity for $L_c$ layers on the filtered context graph. After $L_c$ layers, the
dominant cost of context encoding is $O\left(\sum_{l=1}^{L_c} \bar{D}_c^{l} \cdot d^2\right) = O\left(\bar{D}_c^{L_c} d^2\right)$.

\noindent\textbf{Query-Oriented Encoder.} 
The Query-Oriented Encoder first expands the query-dependent computation region for $L_1$ layers on the full knowledge graph. The following $L_2$ layers perform message passing on the resulting query-dependent subgraph
without further expansion. Thus, the dominant cost is $O\left(\sum_{l=1}^{L_1} \bar{D}^{l} \cdot d^2 + L_2 \bar{D}^{L_1} \cdot d^2\right) = O\left((1 + L_2)\bar{D}^{L_1} d^2\right)$. 

\noindent\textbf{Overall complexity.} Combining the dominant encoder costs, the per-query computational complexity of Q-GNN can be summarized as
\begin{equation}
    O\!\left(\bigl(\bar{D}_c^{L_c} + (1+L_2)\bar{D}^{L_1}\bigr)\, d^2\right).
\end{equation}

\section{Dataset Statistics}
\label{app:dataset}
For the transductive setting, we use five benchmarks: Family~\citep{umls}, UMLS~\citep{umls}, WN18RR~\citep{wn18rr}, FB15k-237~\citep{fb15k237}, and NELL-995~\citep{nell995}. For the inductive setting, we follow prior work~\citep{redgnn} and adopt the inductive splits of WN18RR, FB15k-237, and NELL-995. Detailed statistics of all datasets are provided in Table~\ref{tab:dataset_transductive} and Table~\ref{tab:dataset_inductive}.
\begin{table}[h]
\centering
\caption{Statistics of the benchmark datasets.}
\small
\setlength{\tabcolsep}{6pt}
\begin{tabular}{lrrrrr}
\toprule
Dataset & Entity & Relation & Train & Valid & Test \\
\midrule
Family    & 3,007  & 12  & 23,483  & 2,038  & 2,835  \\
UMLS      & 135    & 46  & 5,327   & 569    & 633    \\
WN18RR    & 40,943 & 11  & 86,835  & 3,034  & 3,134  \\
FB15k-237 & 14,541 & 237 & 272,115 & 17,535 & 20,466 \\
NELL-995  & 74,536 & 200 & 149,678 & 543    & 2,818  \\
\bottomrule
\end{tabular}
\label{tab:dataset_transductive}
\end{table}

\begin{table*}[h]
\centering
\caption{Statistics of datasets used in the inductive setting.}
\small
\setlength{\tabcolsep}{5pt}
\resizebox{\textwidth}{!}{
\begin{tabular}{ll c rrr rrr rrr}
\toprule
\multirow{2}{*}{Dataset} & \multirow{2}{*}{Split} & \multirow{2}{*}{\#Relation}
& \multicolumn{3}{c}{Train}
& \multicolumn{3}{c}{Validation}
& \multicolumn{3}{c}{Test} \\
\cmidrule(lr){4-6} \cmidrule(lr){7-9} \cmidrule(lr){10-12}
& & & Entity & Query & Fact & Entity & Query & Fact & Entity & Query & Fact \\
\midrule
\multirow{4}{*}{WN18RR}
& v1 & 9  & 2,746  & 630   & 5,410  & 2,746  & 638   & 5,410  & 922   & 373   & 1,618  \\
& v2 & 10 & 6,954  & 1,838 & 15,262 & 6,954  & 1,868 & 15,262 & 2,757 & 852   & 4,011  \\
& v3 & 11 & 12,078 & 3,097 & 25,901 & 12,078 & 3,152 & 25,901 & 5,084 & 1,143 & 6,327  \\
& v4 & 9  & 3,861  & 934   & 7,940  & 3,861  & 968   & 7,940  & 7,084 & 2,823 & 12,334 \\
\midrule
\multirow{4}{*}{FB15k-237}
& v1 & 180 & 1,594  & 489   & 4,245  & 1,594  & 492   & 4,245  & 1,093 & 411   & 1,993  \\
& v2 & 200 & 2,608  & 1,166 & 9,739  & 2,608  & 1,180 & 9,739  & 1,660 & 947   & 4,145  \\
& v3 & 215 & 3,668  & 2,194 & 17,986 & 3,668  & 2,214 & 17,986 & 2,501 & 1,731 & 7,406  \\
& v4 & 219 & 4,707  & 3,352 & 27,203 & 4,707  & 3,361 & 27,203 & 3,051 & 2,840 & 11,714 \\
\midrule
\multirow{4}{*}{NELL-995}
& v1 & 14  & 3,103 & 414   & 4,687  & 3,103 & 439   & 4,687  & 225   & 201   & 833    \\
& v2 & 88  & 2,564 & 922   & 8,219  & 2,564 & 968   & 8,219  & 2,086 & 935   & 4,586  \\
& v3 & 142 & 4,647 & 1,851 & 16,393 & 4,647 & 1,873 & 16,393 & 3,566 & 1,620 & 8,048  \\
& v4 & 76  & 2,092 & 876   & 7,546  & 2,092 & 867   & 7,546  & 2,795 & 1,447 & 7,073  \\
\bottomrule
\end{tabular}}
\label{tab:dataset_inductive}
\end{table*}

\section{Hyperparameter Settings}
\label{app:hyper}
The hyperparameter settings of Q-GNN on different datasets are presented in Table~\ref{tab:hyperparameters}. Here, $K$ denotes the maximum type path length; $\theta_c$, $\theta_f$, and $\theta_s$ denote the thresholds for confidence, focus, and semantic relevance, respectively, used in the Type-Guided Context Filter; $L_c$ denotes the number of layers in the Query Context Encoder; and $L_1$ and $L_2$ denote the number of subgraph-expansion layers and additional layers in the Query-Oriented Encoder, respectively. Other hyperparameters follow the same settings as in the RED-GNN implementation. A value of $-1$ for $\theta_s$ indicates that the semantic relevance metric is not applied for the corresponding dataset. Under the inductive setting, training hyperparameters are tuned per split. 
 
\begin{table}[h]
\centering
\caption{Hyperparameter settings of Q-GNN across datasets.}
\begin{tabular}{lcccccccc}
\toprule
\textbf{Dataset} & $K$ & $\theta_c$ & $\theta_f$ & $\theta_s$ & $L_c$ & $L_1$ & $L_2$ & $E$\\
\midrule
WN18RR & 4 & 0.1 & 0.6 & 0.9 & 3 & 5 & 3 & 200\\
FB15k-237 & 3 & 0.5 & 0.11 & 0.73 & 2 & 5 & 3 & 50\\
NELL-995 & 3 & 0.02 & 0.02 & -1 & 2 & 5 & 3 & 30\\
Family & 2 & 0.02 &0.02& -1 & 2 & 3 & 2 & 30\\
UMLS & 3 & 0.5 & 0.1 & -1 & 2 & 3 & 3 & 30\\
\bottomrule
\end{tabular}
\label{tab:hyperparameters}
\end{table}

\section{Additional Results}
We further evaluate Q-GNN on three additional benchmarks: UMLS, Family, and NELL-995. As shown in Table~\ref{tab:transductive_appendix}, Q-GNN achieves the best performance on the majority of metrics across all three datasets. On UMLS, Q-GNN attains the highest MRR and Hit@1, demonstrating its effectiveness on small-scale yet densely connected graphs. On the larger NELL-995, Q-GNN outperforms the strongest baseline DiffusionE by 1.8 points in MRR and 1.3 points in Hit@1, further confirming the effectiveness of the proposed query-conditioned paradigm under more diverse relational structures. On Family, Q-GNN shows relatively modest performance. This is consistent with the nature of Family: as a kinship knowledge graph composed entirely of family members, both its entities and relations are highly homogeneous, making it difficult to differentiate query entities through either type or contextual semantics. The advantage of Q-GNN, which relies on these two dimensions to characterize the query entity, is therefore less pronounced on this dataset.

\label{app:additional}
\begin{table*}[h]
\centering
\caption{Transductive results on UMLS, Family, and NELL-995. ``-'' denotes results not available.}
\resizebox{\textwidth}{!}{\begin{tabular}{c l ccc ccc ccc}
\toprule
\multirow{2}{*}{Type} & \multirow{2}{*}{Methods}
& \multicolumn{3}{c}{UMLS}
& \multicolumn{3}{c}{Family}
& \multicolumn{3}{c}{NELL-995} \\
\cmidrule(lr){3-5} \cmidrule(lr){6-8} \cmidrule(lr){9-11}
& & MRR & Hit@1 & Hit@10 & MRR & Hit@1 & Hit@10 & MRR & Hit@1 & Hit@10 \\
\midrule
\multirow{2}{*}{Triple}
& RotatE     & .925  & .863 & \underline{.993} & .921 & .866  & .988 & .508 & .448 & .608 \\
& HousE      & - & - & - & - & - & - & .528 & .458 & .645 \\
\midrule
\multirow{2}{*}{Path}
& DRUM       & .813 & .674 & .976 & .934 & .881 & \underline{.996} & .532 & .460 & \underline{.662} \\
& RNNLogic   & .842 & .772 & .965 & .881 & .857 & .907 & .416 & .363 & .478 \\
\midrule
\multirow{4}{*}{GNN}
& CompGCN    & .927 & .867 & \textbf{.994} & .933 & .883 & .991 & .463 & .383 & .596 \\
& RED-GNN    & .964 & .946 & .990 & \textbf{.992} & .988 & \textbf{.997} & .543 & .476 & .651 \\
& DiffusionE & \underline{.970} & \underline{.957} & .992 & .990 & \underline{.989} & .992 & \underline{.552} & \underline{.490} & .654 \\
& Q-GNN (ours) & \textbf{.989} & \textbf{.989} & .990 & \underline{.991} & \textbf{.990} & .991 & \textbf{.570} & \textbf{.503} & \textbf{.676} \\
\bottomrule
\end{tabular}}
\label{tab:transductive_appendix}
\end{table*}

\section{Inductive Results on Hit@10}
\label{app:inductive_h10}
\begin{table*}[h]
\centering
\caption{Inductive results (Hit@10) on WN18RR, FB15k-237, and NELL-995.}
\resizebox{\textwidth}{!}{
\begin{tabular}{l cccc cccc cccc}
\toprule
\multirow{2}{*}{Methods}
& \multicolumn{4}{c}{WN18RR}
& \multicolumn{4}{c}{FB15k-237}
& \multicolumn{4}{c}{NELL-995} \\
\cmidrule(lr){2-5} \cmidrule(lr){6-9} \cmidrule(lr){10-13}
& V1 & V2 & V3 & V4 & V1 & V2 & V3 & V4 & V1 & V2 & V3 & V4 \\
\midrule
DRUM       & .777 & .747 & .477 & .702 & .474 & .595 & .571 & .593 & \underline{.873} & .540 & .577 & \underline{.531} \\
GraIL      & .760 & .776 & .409 & .687 & .429 & .424 & .424 & .389 & .565 & .496 & .518 & .506 \\
CoMPILE    & .747 & .743 & .406 & .670 & .439 & .457 & .449 & .358 & .575 & .446 & .515 & .421 \\
RED-GNN    & .794 & .785 & .522 & .711 & .451 & .617 & .559 & .607 & .833 & .536 & .597 & .503 \\
NBFNet     & \textbf{.827} & .799 & \textbf{.563} & .702 & \underline{.517} & .639 & .588 & .559 & .795 & \underline{.635} & \underline{.606} & \textbf{.591} \\
DiffusionE & .799 & \underline{.805} & .532 & \underline{.733} & .479 & \underline{.665} & \underline{.631} & \underline{.631} & .776 & \textbf{.643} & \textbf{.628} & .455 \\
\midrule
Q-GNN (ours) & \underline{.811} & \textbf{.808} & \underline{.550} & \textbf{.738} & \textbf{.613} & \textbf{.693} & \textbf{.638} & \textbf{.655} & \textbf{.896} & .532 & .584 & .494 \\
\bottomrule
\end{tabular}}
\label{tab:inductive_h10}
\end{table*}
Table~\ref{tab:inductive_h10} reports the Hit@10 results under the inductive setting. Q-GNN achieves the best performance on the majority of splits, with particularly notable advantages on FB15k-237. On WN18RR and NELL-995, Q-GNN remains competitive on most splits, while NBFNet and DiffusionE perform better on a few of them, possibly because their designs are more aligned with broader top-$k$ ranking. Overall, the Hit@10 results are consistent with the MRR results in the main paper, confirming the effectiveness of Q-GNN under the inductive setting.

\section{Impact of Query Subgraph Layer}
\begin{wraptable}{r}{0.40\linewidth}
\vspace{-0.8em}
\centering
\caption{Model performance under different $L_1$ settings.}
\label{tab:layer}
\small
\setlength{\tabcolsep}{4pt}
\begin{tabular}{lcccc}
\toprule
$L_1$ & MRR & Hit@1 & Hit@3 & Hit@10 \\
\midrule
3 & .492 & .445 & .509 & .573 \\
4 & .535 & .487 & .558 & .620 \\
5 & .572 & .521 & .594 & .665 \\
6 & \textbf{.580} & \textbf{.527} & \textbf{.603} & \textbf{.685} \\
\bottomrule
\end{tabular}
\vspace{-0.8em}
\end{wraptable}
We evaluate the proposed model under different layer configurations on WN18RR, with $L_1$ set to $\{3, 4, 5, 6\}$. The experimental results presented in Table~\ref{tab:layer} demonstrate that model performance exhibits a monotonically increasing trend as $L_1$ grows. Intuitively, increasing $L_1$ enlarges the query subgraph, covering more candidate entities and thus increasing the likelihood of including the correct answer. However, this also introduces more irrelevant candidates and structural noise. It is worth noting that the performance improvement tends to decrease as the number of layers increases, but the 6-layer configuration still outperforms the 5-layer one, without exhibiting the over-smoothing issue commonly observed in deep GNNs~\citep{oversmooth1,oversmooth2}. This can be attributed to the query-conditioned modulation applied to messages during the message passing process, which effectively emphasizes information that is more relevant to the query. For fairness in comparison with baseline methods, we adopt the 5-layer configuration in the main experiments.

\section{Case Study}
To intuitively demonstrate the type paths selected by Type-Guided Context Filter, we present several examples of type paths extracted from the FB15k-237 dataset in Table~\ref{tab:learned_rules}. The closed paths between head and tail types capture the implicit logic underlying their associations. It is worth noting that some of the paths do not end with the target tail type but still contribute valuable information. For instance, when the head type is person and the tail type is award, the fact that a person appeared in a film that participated in a film festival can indicate that the person is likely to have won an award. Similarly, if a person plays a certain musical instrument or assumes a specific film crew role in a movie's production, this reveals their potential profession. These observations validate the rationale of not imposing restrictions on the end type of paths in Type-Guided Context Filter.

\begin{table}[h]
\centering
\caption{Examples of filtered type paths.}
\begin{tabular}{lll}
\toprule
Head & Tail & Type Path  \\
\midrule
\multirow{2}{*}{\textit{person}} & \multirow{2}{*}{\textit{award}}
& $\textit{person} \rightarrow \textit{film} \rightarrow \textit{award}$\\
& & $\textit{person} \rightarrow \textit{film} \rightarrow \textit{film festival}$\\
\midrule
\multirow{2}{*}{\textit{event}} & \multirow{2}{*}{\textit{country}}
& $\textit{event} \rightarrow \textit{region} \rightarrow \textit{country}$\\
& & $\textit{event} \rightarrow \textit{person} \rightarrow \textit{city}$\\
\midrule
\multirow{2}{*}{\textit{tv program}} & \multirow{2}{*}{\textit{country}}
& $\textit{tv program} \rightarrow \textit{language} \rightarrow \textit{country}$\\
& & $\textit{tv program} \rightarrow \textit{person} \rightarrow \textit{country}$\\
\midrule
\multirow{2}{*}{\textit{person}} & \multirow{2}{*}{\textit{profession}}
& $\textit{person} \rightarrow \textit{film crew role}$\\
& & $\textit{person} \rightarrow \textit{instrument family}$\\
\bottomrule
\end{tabular}
\label{tab:learned_rules}
\end{table}

\end{document}